\newtheorem{theorem}{Theorem}
\newtheorem{lemma}[theorem]{Lemma}
\title{Unsupervised Representation Learning with Minimax Distance Measures\thanks{This manuscript is extension of the previous works in \cite{ChehreghaniSDM2016,ChehreghaniAAAI17}.}}
\author{Morteza Haghir Chehreghani  \\
                Department of Computer Science and Engineering \\
                Chalmers University of Technology\\
                SE-412 96 Gothenburg, Sweden\\
                \texttt{morteza.chehreghani@chalmers.se}
}
\begin{document}
\maketitle

\begin{abstract}
We investigate the use of Minimax distances to extract in a nonparametric way the features that capture the unknown underlying patterns and structures in the data. We develop a general-purpose and computationally efficient framework to employ Minimax distances with many machine learning methods that perform on numerical data.
We study both computing the pairwise Minimax distances for all pairs of objects and as well as computing the Minimax distances of all the objects to/from a fixed (test) object.

We first efficiently compute  the pairwise Minimax distances between the objects, using the equivalence of Minimax distances over a graph and over a minimum spanning tree constructed on that. Then, we perform an embedding of the pairwise Minimax distances into a new vector space, such that their squared Euclidean distances in the new space equal to the pairwise Minimax distances in the original space.
We also study the case of having multiple pairwise Minimax matrices, instead of a single one. Thereby, we propose an embedding via first  summing up the centered matrices and then performing an eigenvalue decomposition to obtain the relevant features.

In the following,  we study computing Minimax distances from a fixed (test) object which can be used for instance in $K$-nearest neighbor search. Similar to the case of all-pair pairwise Minimax distances, we develop an efficient and general-purpose algorithm that is applicable with any arbitrary base distance measure. Moreover, we investigate in detail the edges selected by the Minimax distances and thereby explore the ability of Minimax distances in detecting outlier objects.

Finally, for each setting, we perform several experiments to demonstrate the effectiveness of our framework.
\end{abstract}

\section{Introduction}\label{sec:introduction}
Data is usually described by a set of objects and a corresponding representation. The representation can be for example the vectors in a vector space  or the pairwise dissimilarities between the objects.
In real-world applications, the data is often very complicated and a priori unknown. Thus, the basic representation, e.g., \emph{Euclidean} distance, \emph{Mahalanobis} distance, \emph{cosine} similarity and \emph{Pearson correlation}, might fail to correctly capture the underlying patterns or classes. Thereby, the raw data needs to be processed further in order to obtain a more sophisticated representation.
Kernel methods are a common approach to enrich the basic representation of the data and model the underlying patterns \cite{KernelbookShaweTaylor,Hofmann06areview}.
However, the applicability of kernels is confined by several limitations, such as, i) finding the optimal parameter(s) of a kernel function is often very critical and nontrivial~\cite{Nadler07,Luxburg:2007}, and ii) as we will see later, kernels assume a global structure which does not distinguish between the different type of classes in the data.

A category of distance measures, called \emph{link-based} measure~\cite{Fouss:2007,Chebotarev:2011}, takes into account all the \emph{paths} between the objects represented in a graph (where the edge weights indicate the respective pairwise dissimilarities). The \emph{path-specific} distance between nodes $i$ and $j$ is computed by summing the
edge weights on this path~\cite{YenSMS08}. Their link-based distance is then obtained by summing up the \emph{path-specific} measures of all
paths between them. Such a distance measure is known to better capture the arbitrarily shaped patterns compared to the basic representations such as Euclidean or Mahalobis distances.
Link-based measures are often obtained by inverting the Laplacian of the distance matrix, in the context of regularized Laplacian kernel and Markov diffusion kernel~\cite{YenSMS08,Fouss:2012}.
However, computing all-pairs link-based distances requires $\mathcal O(N^3)$ runtime, where $N$ is the number of objects; thus it is not applicable to large-scale datasets.

A more effective distance measure, called \emph{Minimax} measure, selects the minimum largest gap among all possible paths between the objects. This measure, known also as \emph{Path-based} distance measure, has been first investigated on clustering applications \cite{FischerB03,Chehreghani16MLj,PavanP07}. It was also  proposed as an axiom for evaluating clustering methods \cite{ZadehB09}.
A straightforward approach to compute all-pairs Minimax distances is to use an adapted variant of the Floyd-Warshall algorithm. The runtime of this algorithm is $\mathcal O(N^3)$~\cite{Aho:1974,Cormen:2001:IA:580470}.  This distance measure has been also integrated into a variant of $K$-means providing an agglomerative algorithm whose runtime is $\mathcal O(N^2|E|+N^3\log N)$~\cite{FischerB03} ($|E|$ indicates the number of edges in the corresponding graph).

In addition, Minimax distances have been so far applied  to a limited type of classification, i.e. to $K$-nearest neighbor search. The method in~\cite{KimC07icml} presents a message passing algorithm with forward and backward steps, similar to the sum-product algorithm~\cite{Kschischang:2006}. The method takes $\mathcal O(N)$ time, which is in theory equal to the standard $K$-nearest neighbor search, but the algorithm needs several visits of the training dataset. Moreover, this method requires computing a minimum spanning tree (MST) in advance which might require $\mathcal O(N^2)$ runtime. Later on, a greedy algorithm~\cite{KimC13AAAI}, proposes to compute the Minimax $K$ nearest neighbors by space partitioning and using Fibonacci heaps whose runtime is $\mathcal O(\log N + K\log K)$. However, this method is applicable only to Euclidean spaces and assumes the graph is sparse.

\textbf{Motivation.}
Minimax distances enable to cope with arbitrarily shaped  classes and structures in the data. For example, it has been shown that  Minimax $K$-nearest neighbor classification is effective on non-spherical data, whereas the standard variant, the metric learning approach~\cite{Weinberger:2009}, or the shortest path distance~\cite{tenenbaum_global_2000} might give poor results, since they ignore the underlying geometry. See for example Figure 1 in~\cite{KimC13AAAI}.
In particular, four properties of Minimax distances are attractive for us:

\begin{itemize}[leftmargin=*]
\item They enable to compute the patterns and structures in a non-parametric way, i.e. unlike many kernel methods, they do not require fixing any critical parameter in advance.

\item They extract the structures adaptively, i.e., they adapt appropriately whenever the classes differ in shape or type.

\item They take into account the transitive relations: if object $a$ is similar to $b$, $b$ is similar to $c$, ..., to $z$, then the Minimax distance between $a$ and $z$ will be small, although their direct distance might be large. This property is particularly useful when dealing with elongated or arbitrarily shaped patterns.
Moreover, if a basic pairwise similarity is broken due to noise, then their Mimimax distance is able to correct it via taking into account the other paths and relations. For example if the similarity of $i$ and $j$ is broken, then according to Minimax distances, they might still be similar via $i \rightarrow k \rightarrow l \rightarrow … \rightarrow j$.

\item Many learning methods perform on a vector representation of the objects. However, such a representation might not be available. We might be given only the pairwise distances which do not necessarily induce a \emph{metric}. Minimax distances satisfy the \emph{metric} conditions and enable to compute an embedding, as we will study in this paper.
\end{itemize}

\textbf{Contributions.}
Our goal is to develop a generic and computationally efficient framework wherein many different machine learning algorithms can be applied to Minimax distances, beyond e.g., $K$-nearest neighbor classification or the few clustering methods mentioned before.
 Within this unified framework, we consider computing both the all-pair pairwise Miniamx distances and the Minimax distances of all the objects to/from a fixed (test) object.

\begin{enumerate}[leftmargin=*]
\item We first efficiently compute  the pairwise Minimax distances between the objects, using the equivalence of Minimax distances over a graph and over a minimum spanning tree constructed on that.  This approach reduces the runtime of computing the pairwise Minimax distances to $\mathcal O(N^2)$ from $\mathcal O(N^3)$.

\item Then, we investigate the possibility of embedding the pairwise Minimax distances into a vector space.  This feasibility, allows us to  perform an Euclidean  embedding of the pairwise Minimax distances, such that the pairwise squared Euclidean distances in the new space equal to the pairwise Minimax distances in the original space. Such an embedding enables us to apply any numerical learning algorithm on the resultant Minimax vectors.

\item We also consider the cases where there are multiple pairwise Minimax matrices instead of a single matrix, which might happen when dealing with multiple pairwise relations, robustness or high-dimensional data.
      Hence,  to obtain a collective embedding, we propose to first center the individual Minimax matrices and then sum them up. This makes an embedding feasible, because the resultant matrix is positive semidefinite.

\item In the following, we study computing Minimax distances from a fixed (test) object which can be used for instance in $K$-nearest neighbor search. For this case, we propose an efficient and computational optimal Minimax $K$-nearest neighbor algorithm whose runtime is $\mathcal O(N)$, similar to the standard $K$-nearest neighbor search, and it can be applied with any arbitrary base dissimilarity measure.

\item Moreover, we investigate in detail the edges selected by the Minimax distances and thereby explore the ability of Minimax distances in detecting outlier objects.

\item Finally, we experimentally study our framework in different machine learning problems (classification, clustering and $K$-nearest neighbor search) on several  synthetic and real-world datasets and illustrate its effectiveness and superior performance in different settings.
\end{enumerate}

The rest of the paper is organized as following. In Section \ref{sec:notations}, we introduce the notations and definitions. In Section \ref{sec:minimax-general}, we develop our framework for computing pairwise Minimax distances and extracting the relevant features applicable to general machine learning methods. Here, we also extend our approach for computing and embedding Minimax vectors for multiple pairwise data relations, i.e., to collective Minimax distances. In Section \ref{sec:knn-minimax}, we extend further our framework for computing the Minimax $K$-nearest neighbor search and outlier detection. In Section \ref{sec:experiments}, we describe the experimental studies, and finally, we conclude the paper in Section \ref{sec:conclusion}.

\section{Notations and Definitions} \label{sec:notations}
A dataset can be modeled by a graph $\mathcal G(\mathbf O,\mathbf D)$, where $\mathbf O$ and $\mathbf D$ respectively indicate the set of $N$ objects (nodes) and the corresponding edge weights such that $\mathbf D_{ij}$ shows the pairwise dissimilarity between objects $i$ and $j$. The base pairwise dissimilarities $\mathbf D_{ij}$ can be computed for example according to squared Euclidean distances or cosine (dis)similarities between the vectors that represent $i$ and $j$. In several applications, $\mathbf D_{ij}$ might be given directly.\footnote{For simplicity of explanation, we assume that the graph is full, i.e. the missing edges are filled by a large value. However, our analysis can be easily extended to arbitrary graphs \cite{Chehreghani17ICDM}.}
In general, $\mathbf D$ might not yield a \emph{metric}, i.e. the triangle inequality may not hold.
We have recently studied the application of Minimax distances to correlation clustering, where the triangle inequality does not necessarily hold for the base pairwise dissimilarities \cite{abs-1904-13223}.
In our study, $\mathbf D$ needs to satisfy three basic conditions:

\begin{enumerate}[leftmargin=*]
\item zero self distances, i.e. $\forall i,\mathbf D_{ii}=0$,
\item non-negativity, i.e. $\forall i,j, \mathbf D_{ij}\ge 0$, and
\item symmetry, i.e. $\forall i,j, \mathbf D_{ij}=\mathbf D_{ji}$.
\end{enumerate}

We  assume there are no duplicates, i.e. the pairwise dissimilarity between every two distinct objects is positive. For this purpose, we may either remove the duplicate objects or perturb them slightly to make the zero non-diagonal elements of $\mathbf D$ positive. The goal is then to use the Minimax distances and the respective features to perform the machine learning task of interest.

The Minimax (MM) distance between objects $i$ and $j$ is defined as

\begin{eqnarray}
	\mathbf D_{i,j}^{MM} &=& \min_{r\in \mathcal R_{ij}(\mathbf O)}\{ \max_{1\le l \le |r|-1}\mathbf D_{r(l)r(l+1)}\},
	\label{eq:PathStandard}
\end{eqnarray}
where $\mathcal R_{ij}(\mathbf O)$ is the set of all paths between $i$ and $j$. Each path $r$ is identified by a sequence of object indices, i.e. $r(l)$
shows the $l^{th}$ object on the path.

\section{Features Extraction from Pairwise Minimax Distances} \label{sec:minimax-general}

We aim at developing a  unified framework for performing arbitrary numerical learning methods with Minimax distances.  To cover different algorithms, we pursue the following strategy:
\begin{enumerate}[leftmargin=*]
\item We compute the pairwise Minimax distances for all pairs of objects $i$ and $j$ in the dataset.
\item We, then, compute an embedding of the objects into a new vector space such that their pairwise (squared Euclidean) distances in this space equal their Minimax distances in the original space.
\end{enumerate}

Notice that vectors are the most basic way for data representation, since they render a bijective mapping between the objects and the measurements. Hence, any machine leaning method which performs on numerical data can benefit from our approach. Such methods perform either on the objects in a vector space, e.g. logistic regression, or on a kernel matrix computed from the pairwise relations between the objects. In the later case, the pairwise Minimax distances can be used for this purpose, for example through an exponential transformation, or the kernel can be computed from the final Minimax vectors.

\subsection{Computing pairwise Minimax distances}
Previous works for computing Minimax distances (e.g. applied to clustering) use a variant of Floyd-Warshall algorithm whose runtime is $\mathcal O(N^3)$~\cite{Aho:1974,Cormen:2001:IA:580470}, or combine it with $K$-means in an agglomerative algorithm whose runtime is $\mathcal O(N^2|E|+N^3\log N)$~\cite{FischerB03}. To reduce such a computational demand, we follow a more efficient procedure established in two steps: i) build a minimum spanning tree (MST) over the graph, and then, ii) compute the Minimax distances over the MST.

\noindent\textbf{I. Equivalence of Minimax distances over a graph and over a minimum spanning tree on the graph}

We exploit the equivalence of Minimax distances over an arbitrary graph and those obtained from a minimum spanning tree on the graph, as expressed in Theorem \ref{theorem:PathGraphMST}.\footnote{This result may lead to several simplifications for computing the base pairwise distance measure $\mathbf D$ and the respective graph $\mathcal G(\mathbf O, \mathbf D)$, where the graph does not need to be necessarily full.  For example, instead of a full graph, we can compute a minimal (connected) graph that  sufficiently includes a minimum spanning tree. For instance, one might use the $K$ nearest neighbor graph, which is not necessarily full.
On the other hand, we might have no idea about the pairwise dissimilarities. Then we may query them from a user and infer the rest by Minimax distances. In fact, this result proposes an efficient method for this purpose: due to equivalence of Minimax distances on a graph and on any minimum spanning tree on that, we may be able to query significantly smaller number of pairwise dissimilarities that sufficiently represent a MST. This can be significantly faster than querying or inferring all the pairwise distances to obtain $\mathbf D$.}

\begin{theorem}
Given graph $\mathcal G(\mathbf O,\mathbf D)$, a minimum spanning tree constructed on that provides the sufficient and necessary edges to obtain the pairwise Minimax distances.
\label{theorem:PathGraphMST}
\end{theorem}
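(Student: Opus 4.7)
The plan is to establish the two inequalities $\mathbf{D}_{ij}^{MM}(\mathcal{G}) \le \mathbf{D}_{ij}^{MM}(T)$ and $\mathbf{D}_{ij}^{MM}(\mathcal{G}) \ge \mathbf{D}_{ij}^{MM}(T)$ for an arbitrary MST $T$ of $\mathcal{G}$ and any pair of nodes $i,j$, and then to argue that no edge of the tree can be discarded. Fix an MST $T$ of $\mathcal{G}(\mathbf{O},\mathbf{D})$. For the first inequality I would simply note that $T$ is a subgraph of $\mathcal{G}$, so the unique $i$--$j$ path in $T$ is also a valid path in $\mathcal{G}$; hence the minimum over the larger set $\mathcal{R}_{ij}(\mathbf{O})$ of all $\mathcal{G}$-paths cannot exceed the bottleneck of that tree path.

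The substantive direction is the reverse inequality, and this is where the MST property must be used. My approach is a swap argument of cut-property flavour. Let $\pi_T$ be the unique $i$--$j$ path in $T$ and let $e=(u,v)$ be an edge of maximum weight along $\pi_T$, so that $w(e)=\mathbf{D}_{ij}^{MM}(T)$. Suppose, for contradiction, that some path $P\in\mathcal{R}_{ij}(\mathbf{O})$ in $\mathcal{G}$ has bottleneck strictly less than $w(e)$. Deleting $e$ from $T$ splits $T$ into two components $T_u\ni u$ and $T_v\ni v$, with $i$ and $j$ in different components (since $e$ lies on $\pi_T$). Because $P$ connects $i$ and $j$, it must contain at least one edge $e'=(u',v')$ that crosses this cut, with $u'\in T_u$ and $v'\in T_v$. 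By the bottleneck assumption $w(e')<w(e)$. Then $T'=(T\setminus\{e\})\cup\{e'\}$ is a spanning tree of $\mathcal{G}$ with total weight strictly smaller than $T$, contradicting minimality. Hence no such $P$ exists, which yields $\mathbf{D}_{ij}^{MM}(\mathcal{G})\ge \mathbf{D}_{ij}^{MM}(T)$, and combining the two bounds gives equality. This establishes sufficiency: the $N-1$ MST edges carry all information needed to recover every one of the $\binom{N}{2}$ pairwise Minimax distances.

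For necessity I would argue that no edge of $T$ may be removed without losing some Minimax distance. Let $e=(u,v)\in T$ with weight $w(e)$. Removing $e$ partitions $T$ into components $T_u$ and $T_v$, and by the cut property $e$ is a lightest edge of $\mathcal{G}$ across this cut (otherwise a strictly lighter crossing edge would yield a spanning tree of smaller total weight). Consequently, every $u$--$v$ path in $\mathcal{G}$ contains some crossing edge of weight at least $w(e)$, so the Minimax distance $\mathbf{D}_{uv}^{MM}(\mathcal{G})$ equals $w(e)$ and is attained only via crossing edges of weight $w(e)$; removing $e$ from the tree (without substituting another edge of equal weight across the same cut) would therefore prevent recovering this distance from the remaining edges alone. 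In other words, the edges of any MST form a minimal sufficient set, which is what ``necessary'' means here.

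The main obstacle is the reverse inequality, and the cut/swap step is where care is required: one has to be sure that the existence of a crossing edge $e'$ on $P$ with $w(e')<w(e)$ really does contradict the minimality of $T$ rather than merely the uniqueness of the MST. The argument above only needs strict inequality $w(e')<w(e)$, so it goes through even when edge weights are not distinct, which matches the paper's assumption that there are no duplicate objects but permits otherwise arbitrary symmetric non-negative dissimilarities.
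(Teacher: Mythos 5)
Your proof is correct, and it takes a noticeably different (and more rigorous) route than the paper's. The paper gives a high-level sketch that starts from the edge $e^{\texttt{MM}}_{ij}$ realizing the Minimax distance and argues, via a case analysis on whether that edge is forced by connectivity or sits on a cycle, that any MST must retain it --- essentially a cycle-property argument (``the MST discards the heaviest edge of a cycle, and the alternative edge is no lighter''). You instead decompose the claim into the two inequalities $\mathbf{D}^{MM}_{ij}(\mathcal G)\le \mathbf{D}^{MM}_{ij}(T)$ (trivial, since the tree path is a graph path) and $\mathbf{D}^{MM}_{ij}(\mathcal G)\ge \mathbf{D}^{MM}_{ij}(T)$, proving the latter by the cut/exchange argument: a graph path with strictly smaller bottleneck would have to cross the cut induced by deleting the heaviest tree edge with a strictly lighter edge, contradicting minimality of $T$. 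This is the standard ``bottleneck shortest path equals MST path bottleneck'' proof, and it buys you two things the paper's sketch does not make explicit: it handles ties in edge weights cleanly (the contradiction only needs strict inequality against minimality, not uniqueness of the MST), and it avoids the paper's slightly informal reasoning about which edge of a cycle ``the MST algorithm'' keeps. Your necessity argument is also sharper than the paper's: the paper only observes that $N-1$ edges are needed for connectivity, whereas you show each tree edge $e=(u,v)$ is a lightest crossing edge of its cut and hence is itself the unique witness (up to equal-weight substitutes) of the Minimax distance $\mathbf{D}^{MM}_{uv}=w(e)$, so no edge can be dropped. No gaps.
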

\begin{proof}

\begin{figure}[t!]
    \centering
    \subfigure[$e^{\texttt{MM}}_{i,j}$ is the largest edge between $i$ and $j$.]
    {
        \includegraphics[width=0.34\textwidth]{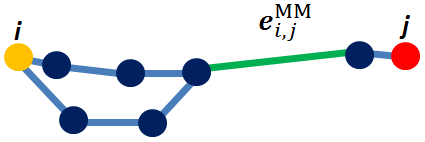}
        \label{fig:MSTPath1route}
    }
    \hspace{7mm}
    \subfigure[$e^{\texttt{MM}}_{i,j}$ is not the largest edge between $i$ and $j$.]
    {
        \includegraphics[width=0.34\textwidth]{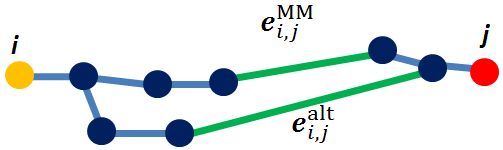}
        \label{fig:MSTPath2route}
    }
    \caption{Equivalence of Minimax distances over a graph and over a MST constructed on the graph.}
    \label{fig:MSTPath}
\end{figure}

The equivalence is concluded in a similar way to the \emph{maximum capacity} problem~\cite{Hu61}, where one can show that picking an edge which does not belong to a minimum spanning tree, yields a larger Minimax distance (i.e., a contradiction occurs).

We here describe a high level proof. Let $e^{\texttt{MM}}_{ij}$ denote the edge representing the Minimax distance between $i$ and $j$. We investigate two cases:

\begin{enumerate} [leftmargin = *]
\item There is only one path between $i$ and $j$. Then, this path will be part of any minimum spanning tree constructed over the graph. Since, otherwise, if some edge for example $e^{\texttt{MM}}_{ij}$ is not selected, then the tree will lose the connectivity, i.e. there will be less edges in the tree than $N-1$. The same argument holds when there are several paths between $i$ and $j$, but all share  $e^{\texttt{MM}}_{ij}$ such that $e^{\texttt{MM}}_{i,j}$ is the largest edge on all of them (Figure~\ref{fig:MSTPath1route}).  Then $e^{\texttt{MM}}_{ij}$ will be selected by any minimum spanning tree, as otherwise the tree will be disconnected.

\item There are several paths between $i$ and $j$, whose largest edges are different.  We need to show that only the path including $e^{\texttt{MM}}_{i,j}$ is selected by a minimum spanning tree. To prove, consider two paths, one including $e^{\texttt{MM}}_{i,j}$ and the other containing $e^{\texttt{alt}}_{i,j}$ which is the largest edge on the alternative path (Figure~\ref{fig:MSTPath2route}). It is easy to see that the minimum spanning tree would choose $e^{\texttt{MM}}_{i,j}$ instead of $e^{\texttt{alt}}_{i,j}$: existence of two different paths indicates presence of at least one cycle. According to the definition of a tree, one edge  of a cycle should be eliminated to yield a tree.  In the cycle including $e^{\texttt{MM}}_{i,j}$, according to the definition of Minimax distances, there is at least one edge not smaller than $e^{\texttt{MM}}_{i,j}$ (which is $e^{\texttt{alt}}_{i,j}$), as otherwise $e^{\texttt{MM}}_{i,j}$ would not represent the Minimax distance (the minimum of largest gaps) between $i$ and $j$. Thereby the minimum spanning tree algorithm keeps  $e^{\texttt{MM}}_{i,j}$ instead of $e^{\texttt{alt}}_{i,j}$, since this choice leads to a tree with a smaller total weight. Notice that the algorithm cannot discard both $e^{\texttt{MM}}_{i,j}$ and $e^{\texttt{alt}}_{i,j}$,  because then the connectivity will be lost.
\end{enumerate}

On the other hand, it is \emph{necessary} to build a minimum spanning tree, e.g. to have at least $N-1$ edges selected given that the graph $\mathcal G$ is connected. Otherwise, the computed MST will lose its connectivity and thereby for some pairs of objects there will be no path to be computed.
\end{proof}

Therefore, to compute  the Minimax distances $\mathbf D^{MM}$, we need to care only about the edges which are present in an MST over the graph.  Then, the Minimax distances are written by
\begin{equation}
\mathbf D^{MM}_{ij} = \max_{1\le l \le |r_{ij}|-1}\mathbf D_{r_{ij}(l)r_{ij}(l+1)},\
\end{equation}
where $r_{ij}$ indicates the (only) path between $i$ and $j$. This result does not depend on the particular choice of the algorithm used for constructing the minimum spanning tree. The graph that we assume is a full graph, i.e. we compute the pairwise dissimilarities between all pairs of objects. For full graphs, the straightforward implementation of the Prim's algorithm~\cite{Prim1957} using an auxiliary vector requires $\mathcal O(N^2)$ runtime which is an optimal choice.

\noindent\textbf{II. Pairwise Minimax distances over a MST}

At the next step, after constructing a minimum spanning tree, we compute the pairwise Minimax distances over that.
A naive and straightforward algorithm would perform a Depth First Search (DFS) from each node to compute the Minimax distances by keeping the track of the largest distance between the initial node and each of the traversed nodes. A single run of DFS requires $\mathcal O(N)$ runtime and thus the total time will be $\mathcal O(N^2)$.
However, such a method might lead to visiting  some edges multiple times which renders an unnecessary extra computation. For instance, a maximal edge weight might appear in many DFSs such that it is processed several times. Hence, a more elegant approach would first determine all the objects whose pairwise distances are represented by an edge weight and then assigns the weight as their Minimax distances. According to Theorem~\ref{theorem:PathGraphMST}, every edge in the MST represents some Minimax distances. Thus, we first find the edge(s) which represent only few Minimax distances, namely only one pairwise distance, such that the respective objects can be immediately identified. Lemma~\ref{lemma:PathMSTsmallest} suggest existence of this kind of edges.

\begin{lemma}
In a minimum spanning tree, for the minimal edge weights we have $\mathbf D^{MM}_{ij}=\mathbf D_{ij}$, where $i$ and $j$ are the two objects that occur exactly at the two sides of the edge.
\label{lemma:PathMSTsmallest}
\end{lemma}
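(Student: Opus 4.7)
The plan is to combine Theorem \ref{theorem:PathGraphMST} with the uniqueness of tree paths. First, by Theorem \ref{theorem:PathGraphMST} I may replace $\mathbf D^{MM}_{ij}$ by the Minimax distance computed over any MST of $\mathcal G$. Then I would observe that a minimum spanning tree is, in particular, a tree, so between any two nodes there is exactly one path. For $i$ and $j$ sitting at the two endpoints of an MST edge $e=(i,j)$, that unique tree path $r_{ij}$ consists of the single edge $e$ itself, and therefore $\max_{l} \mathbf D_{r_{ij}(l) r_{ij}(l+1)} = \mathbf D_{ij}$. This immediately yields $\mathbf D^{MM}_{ij}=\mathbf D_{ij}$.

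The hypothesis that the edge weight is \emph{minimal} is not actually needed for the identity $\mathbf D^{MM}_{ij}=\mathbf D_{ij}$ itself (the argument above works for \emph{every} MST edge), but it plays the role suggested by the text preceding the lemma: I would briefly explain that if $\mathbf D_{ij}$ is the smallest edge weight in the MST, then for any other pair $(a,b)$ whose tree path passes through $e$, that path also contains at least one additional edge, which by minimality has weight $\geq \mathbf D_{ij}$. So the maximum along the $(a,b)$-path is attained by some edge other than $e$, meaning $e$ \emph{represents} (in the sense of realizing the max) precisely the single Minimax pair $(i,j)$. This observation is what licenses the subsequent algorithmic step of peeling off the smallest MST edge and immediately assigning $\mathbf D^{MM}_{ij}=\mathbf D_{ij}$ for its endpoints.

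There is really no hard step here: the only thing to argue carefully is that the tree path between adjacent nodes of a tree is the connecting edge itself, which follows from the acyclicity of trees (any alternative path would, together with $e$, form a cycle). I would state this in one line, invoke Theorem \ref{theorem:PathGraphMST}, and add the short minimality remark above to justify the algorithmic use of the lemma.
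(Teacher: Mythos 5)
Your proof is correct and follows essentially the same route as the paper's: both rest on the uniqueness of paths in a tree and on the observation that, for the minimal MST edge, the path between any pair other than its two endpoints contains another edge of weight at least as large, so that edge (and not the minimal one) realizes the maximum. If anything, you are slightly more complete---the paper's proof concentrates on showing that $e_{min}$ represents no pair beyond its endpoints and leaves the identity $\mathbf D^{MM}_{ij}=\mathbf D_{ij}$ itself implicit, whereas you derive it explicitly from Theorem~\ref{theorem:PathGraphMST} together with the fact that the unique tree path between adjacent nodes is the connecting edge.
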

\begin{proof}
Without loss of generality, we assume that the edge weights are distinct. Let $e_{min}$ denote the edge  with minimal weight in the MST. We consider the pair of objects $p$ and $q$ such that at least one of them is not directly connected to  $e_{min}$ and show that $e_{min}$ does not represent their Minimax distance.
In a MST, there is exactly one path between each pair of objects. Thus,  on the path between $p$ and $q$ there is at least another edge whose weight is not smaller than the weight of $e_{min}$, which hence represents the Minimax distance between $p$ and $q$, instead of $e_{min}$.
\end{proof}

Lemma~\ref{lemma:PathMSTsmallest} yields a dynamic programming approach to compute the pairwise Minimax distances from a tree. We first  sort the edge weights of the MST via for example merge sort or heap sort~\cite{Cormen:2001:IA:580470} which in the worst case require $\mathcal O(N \log N)$ time.
We then consider each object as a separate component. We process the edge weights one by one from the smallest to the largest, and at each step, we set the Minimax distance of the two respective components  by this weight. Then, we remove the two components and replace them by a new component constructed from the combination (union) of the two. We repeat these two steps for all edges of the minimum spanning tree. A main advantage of this algorithm is that  whenever an edge is processed, all the nodes that it represents their Minimax distances are ready in the components connected to the two sides of the edge.
Algorithm \ref{alg:All_pair_Path-based_Dist} describes the procedure in detail.

\begin{algorithm}[htb!]
\caption{Computation of pairwise Minimax distances on a tree.}
\label{alg:All_pair_Path-based_Dist}
\begin{algorithmic} [1]
\REQUIRE {A tree specified by, i) \emph{indices}: a $|E|\times2$ matrix of (endpoints of) edges, and ii) \emph{weights}: vector of edge weights.}
\ENSURE Matrix of pairwise Minimax distances $\mathbf D^{MM}$.
\vspace{2mm}

\STATE $sorting\_ind  = \arg sort(weights)$
\STATE $weights = weights[sorting\_ind]$
\STATE $indices = indices[sorting\_ind,:]$
\vspace{2mm}

\STATE \emph{component\_list} $= []$
\FOR{$i=0$ \textbf{to} $N-1$}
	\STATE \emph{component\_id}[i] $= i$
	\STATE \emph{component\_list.append}$([i])$
\ENDFOR
\STATE \emph{cur\_id} $= N-1$

\FOR{$i=0$ \textbf{to} $|E|-1$}
\STATE $ind1 = indices[i,0]$
\STATE $ind2 = indices[i,1]$

\IF{\emph{component\_id}$[ind1]$ $\ne$ \emph{component\_id}$[ind2]$}
	\STATE \emph{first\_side = component\_list}$[$\emph{component\_id}$[ind1]]$
	\STATE \emph{scnd\_side = component\_list}$[$\emph{component\_id}$[ind2]]$

	\vspace{2mm}
	\STATE \emph{new\_component= first\_side $\cup$ scnd\_side}
	\STATE \emph{cur\_id = cur\_id +1}

	\vspace{2mm}

	\STATE \emph{component\_id}$[$\emph{new\_component}$]$ \emph{= cur\_id}
	\STATE \emph{component\_list.append}$($\emph{new\_component}$)$
	
	\vspace{2mm}
	
	\STATE $\mathbf D^{MM}[first\_side,scnd\_side] = weights[i]$
	\STATE $\mathbf D^{MM}[scnd\_side,first\_side] = weights[i]$
\ENDIF
\ENDFOR

\RETURN $\mathbf D^{MM}$
\end{algorithmic}
\end{algorithm}

Algorithm \ref{alg:All_pair_Path-based_Dist} uses the following data structures.

\begin{enumerate}[leftmargin =*]
\item $component\_list$: is a list of lists, wherein each list (\emph{component}) contains the set of nodes (objects) that are treated similarly, i.e. they have the same Minimax distance to/from an external node or component.
\item $component\_id$: a $N$-dimensional vector containing the ID of the latest component that each object belongs to.
\end{enumerate}

Each object initially constitutes a separate component.
The algorithm, at each step $i$, pops out the unselected edge $e_i$ which has the smallest weight. For this purpose, we assume that the vector of the edge weights (i.e. $weights$) has been sorted in advance. The nodes associated to the edges are rearranged according to the ordering of $weights$ and are stored in $indices$.
If the edge $e_i$ does not entirely occur inside a single component, then  the nodes reachable from each side of $e_i$ (i.e. from $ind1$ and $ind2$) are selected and stored respectively in $first\_side$ and $scnd\_side$. For this purpose, we use a vector called $component\_id$ which keeps the ID (index) of the latest component that each node belongs to. Therefore, the component  $first\_side$ is obtained by $component\_list[component\_id[ind1]]$ and similarly $scnd\_side$ by \\ $component\_list[component\_id[ind2]]$. Then $\mathbf D^{MM}$ is updated by
\begin{align}
&\mathbf D^{MM}[first\_side,scnd\_side] = weights[i], \nonumber\\
&\mathbf D^{MM}[scnd\_side,first\_side] = weights[i].
\label{eq:updatePathMatrix}
\end{align}

Finally, a new component is constructed and added to $component\_list$ by combining the two base components $first\_side$ and $scnd\_side$. The ID of this new component is used as the ID of its members in $component\_id$.

\subsection{Embedding of pairwise Minimax distances}

In the next step, given the matrix of pairwise Minimax distances $\mathbf D^{MM}$, we obtain an embedding of the objects into a vector space such that their pairwise squared Euclidean distances in this new space are equal to their Minimax distances in the original space. For this purpose, in Theorem~\ref{theorem:MinimaxEmbedding}, we investigate a useful property of Minimax distance measures, called the \emph{ultrametric} property~\cite{Leclerc1981} which can be used to prove the existence of such an embedding.

\begin{theorem}
Given the pairwise distances $\mathbf D$, the matrix of Minimax distances $\mathbf D^{MM}$ induces an $\mathcal{L}_2^2$ embedding, i.e. there exist a new vector space for the set of objects $\mathbf O$ wherein the pairwise squared Euclidean distances are equal to the pairwise Minimax distances in the original space.
\label{theorem:MinimaxEmbedding}
\end{theorem}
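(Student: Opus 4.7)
The plan is to split the proof into two stages. In the first stage, I will establish that the Minimax distance matrix $\mathbf{D}^{MM}$ is an \emph{ultrametric}, i.e.\ that for every triple $i,j,k$,
\begin{equation*}
\mathbf{D}^{MM}_{ij} \;\le\; \max\bigl(\mathbf{D}^{MM}_{ik},\,\mathbf{D}^{MM}_{kj}\bigr).
\end{equation*}
This follows immediately from path concatenation: concatenating the Minimax-optimal path from $i$ to $k$ with that from $k$ to $j$ yields a valid path from $i$ to $j$ whose largest edge equals $\max(\mathbf{D}^{MM}_{ik},\mathbf{D}^{MM}_{kj})$, and by definition $\mathbf{D}^{MM}_{ij}$ is no larger than the bottleneck of any such path.

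In the second stage, I will show that any ultrametric admits an $\mathcal{L}_2^2$ embedding. By Schoenberg's classical characterization, it suffices to prove that $\mathbf{D}^{MM}$ is conditionally negative definite, i.e., that $\sum_{i,j}\alpha_i\alpha_j\mathbf{D}^{MM}_{ij}\le 0$ for every real vector $\boldsymbol{\alpha}$ satisfying $\sum_i\alpha_i=0$. The embedding can then be read off from the eigenvectors of the double-centered matrix $-\tfrac12\mathbf{H}\mathbf{D}^{MM}\mathbf{H}$ with $\mathbf{H}=\mathbf{I}-\tfrac{1}{N}\mathbf{1}\mathbf{1}^\top$.

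To establish conditional negative definiteness, I will exploit the dendrogram interpretation that emerges naturally from Algorithm~\ref{alg:All_pair_Path-based_Dist}: each merge step produces an internal node of a rooted tree, and $\mathbf{D}^{MM}_{ij}$ equals the height $h(v)$ of the lowest common ancestor of $i$ and $j$. Enumerating the distinct edge weights of the MST as $0=h_0<h_1<\cdots<h_K$, and letting $C_m(i)$ denote the connected component containing $i$ when only MST edges of weight strictly less than $h_m$ are used, I can express
\begin{equation*}
\mathbf{D}^{MM}_{ij} \;=\; \sum_{m=1}^{K}(h_m-h_{m-1})\,\mathbb{1}\bigl[C_m(i)\ne C_m(j)\bigr].
\end{equation*}
Summing against $\alpha_i\alpha_j$ and using $\sum_i\alpha_i=0$, each level contributes
\begin{equation*}
\sum_{i,j}\alpha_i\alpha_j\,\mathbb{1}[C_m(i)\ne C_m(j)] \;=\; \Bigl(\sum_i \alpha_i\Bigr)^{\!2} - \sum_{c}\Bigl(\sum_{i\in c}\alpha_i\Bigr)^{\!2} \;=\; -\sum_{c}\Bigl(\sum_{i\in c}\alpha_i\Bigr)^{\!2} \le 0,
\end{equation*}
where $c$ ranges over components at level $m$. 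Since $h_m-h_{m-1}>0$, the overall sum is nonpositive, which is exactly the required conditional negative-definiteness.

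I expect the main obstacle to be conceptual rather than technical: tying together Schoenberg's theorem, the ultrametric property, and the level-set decomposition of the dendrogram in a way that produces a transparent inequality. The path-concatenation argument in stage one and the per-level variance-style identity in stage two are both short; the real content is recognizing that the hierarchical structure furnished by the MST makes the conditional negative definiteness visible level by level.
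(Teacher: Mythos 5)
Your proposal is correct, and while the first stage coincides with the paper's argument, the second stage takes a genuinely different and more self-contained route. The paper also begins by verifying the ultrametric axioms (your path-concatenation argument for $\mathbf D^{MM}_{ij}\le\max(\mathbf D^{MM}_{ik},\mathbf D^{MM}_{kj})$ is the contrapositive of the paper's proof by contradiction, and you should also record the routine zero-diagonal, nonnegativity, and symmetry checks), but it then finishes by citation: ``ultrametric matrices are positive definite'' (Fiedler, Varga--Nabben) plus Schoenberg. That citation is actually somewhat delicate, because the Fiedler--Varga notion of an ultrametric \emph{matrix} is a similarity-type object satisfying $a_{ij}\ge\min(a_{ik},a_{kj})$ with dominant diagonal, not the distance-type ultrametric $\mathbf D^{MM}$ at hand, so an extra conversion step is implicitly being leaned on. Your level-set decomposition
$\mathbf D^{MM}_{ij}=\sum_{m=1}^{K}(h_m-h_{m-1})\,\mathbb{1}[C_m(i)\ne C_m(j)]$
bypasses this entirely: it proves conditional negative definiteness directly, which is exactly the hypothesis of Schoenberg's characterization of $\mathcal L_2^2$-embeddability, and the per-level identity $\sum_{i,j}\alpha_i\alpha_j\mathbb{1}[C_m(i)\ne C_m(j)]=-\sum_c\bigl(\sum_{i\in c}\alpha_i\bigr)^2\le 0$ under $\sum_i\alpha_i=0$ is checked correctly. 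What the paper's route buys is brevity; what yours buys is rigor and transparency --- the hierarchical (single-linkage dendrogram) structure coming from Theorem~\ref{theorem:PathGraphMST} makes the negative-type inequality visible level by level, and it also directly justifies the positive semidefiniteness of the double-centered matrix used in the paper's own multidimensional-scaling step.
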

\begin{proof}
First, we investigate that the pairwise Minimax distances $\mathbf D^{MM}$ constitute an \emph{ultrametric}. According to~\cite{Leclerc1981}, the conditions to be satisfied are:

\begin{enumerate}[leftmargin=*]
\item $\forall i,j: \mathbf D^{MM}_{ij} = 0$ if and only if $i=j$. We verify each of the conditions separately. i) If $i=j$, then $\mathbf D^{MM}_{ij} = 0$: We have $\mathbf D^{MM}_{ii}=\mathbf D_{ii}=0$ because the smallest maximal gap between every object and itself is zero. ii) If $\mathbf D^{MM}_{ij}=0$, then $\mathbf D_{ij}=0$ and $i=j$, because we have assumed that all the distinct pairwise distances are positive, i.e. zero base or Minimax pairwise distances can occur only if $i=j$.

\item $\forall i,j: \mathbf D^{MM}_{ij} \ge 0$. All the edge weights, i.e. the elements of $\mathbf D$, are non-negative. Thus the minimum of  them, i.e. $\min(\mathbf D)$, is also non-negative. Moreover, by definition we have $\mathbf D^{MM}_{ij} \ge \min(\mathbf D)$. Hence, we conclude $\mathbf D^{MM}_{ij} \ge \min(\mathbf D) \ge 0$.

\item  $\forall i,j: \mathbf D^{MM}_{ij} = \mathbf D^{MM}_{ji}$. By assumption $\mathbf D$ is symmetric, therefore, any path from $i$ to $j$ will also be  a path from $j$ to $i$, and vice versa. Thereby, their maximal weights and the minimum among different paths are identical.

\item $\forall i,j,k: \mathbf D^{MM}_{ij}  \le  \max\{\mathbf D^{MM}_{ik},\mathbf D^{MM}_{kj}\}$. We show that otherwise a contradiction occurs. Suppose there is a triplet $i,j,k$ such that $\mathbf D^{MM}_{ij}  >  \max\{\mathbf D^{MM}_{ik},\mathbf D^{MM}_{kj}\}$. Then, according to the definition of Minimax distance, the path from $i$ to $k$ and then to $j$ must be used for computing the Minimax distance $\mathbf D^{MM}_{ij}$ which leads to $\mathbf D^{MM}_{ij} \le  \max\{\mathbf D^{MM}_{ik},\mathbf D^{MM}_{kj}\}$, i.e. a contradiction occurs.
\end{enumerate}

On the other hand,  \emph{ultrametric} matrices are positive definite \cite{Fiedler1998,Varga1993OnSU} and positive definite matrices induce an Euclidean embedding \cite{Schoenberg}.
\end{proof}

Notice that we do not require $\mathbf D$ to induce a \emph{metric}, i.e. the triangle inequality is not necessarily fulfilled.
After satisfying the feasibility condition, there exist several ways to compute a squared Euclidean embedding. We exploit a method motivated in~\cite{RePEc1938} and further analyzed in~\cite{torgerson1958theory} that is known as \emph{multidimensional scaling} \cite{doi:10.2307/2348634}. This method works based on centering $\mathbf D^{MM}$ to compute a Mercer kernel which is positive semidefinite, and then performing an eigenvalue decomposition, as following:
\footnote{One could instead use more efficient methods such as \cite{GlobersonCPT07,KhoshneshinS10}, or use faster approximate of eigenvectors/eigenvalues \cite{Quarteroni2007}.}
\begin{enumerate}
    \item Center $\mathbf D^{MM}$ by
    \begin{equation}
   	 \mathbf{W}^{MM}\leftarrow -\frac{1}{2}\mathbf{A} \mathbf D^{MM} \mathbf{A}.
	\label{Eq:centering}
    \end{equation}
    $\mathbf{A}$ is defined as $\mathbf{A} = \mathbf{I}_N - \frac{1}{N}\mathbf{e}_N\mathbf{e}_N^{T}$, where $\mathbf{e}_N$ is a vector of length $N$ with $1$'s and $\mathbf{I}_N$ is an identity matrix of size $N$.

    \item Under this transformation, $\mathbf W^{MM}$ is positive semidefinite. Thus, we decompose $\mathbf W^{MM}$ into its eigenbasis, i.e.,
\begin{equation}
 \mathbf W^{MM}=\mathbf{V}\boldsymbol{\Lambda}\mathbf{V}^{T},
\end{equation}
  where $\mathbf{V} = (v_1,...,v_N)$ contains the eigenvectors $v_i$ and $\boldsymbol{\Lambda}=\texttt{diag}(\lambda_1,...,\lambda_N)$ is a diagonal matrix of eigenvalues $\lambda_1\geq...\geq\lambda_d\geq\lambda_{d+1}= 0 = ... = \lambda_N$.  Note that the eigenvalues are nonnegative, since $\mathbf W^{MM}$ is positive semidefinite.

    \item Calculate the $N\times d$ matrix

\begin{equation}
\mathbf{Y}^{MM}_d=\mathbf{V}_d(\boldsymbol{\Lambda}_d)^{1/2},
\end{equation}
where $\mathbf{V}_d=(v_1,...,v_d)$ and   $\boldsymbol{\Lambda}_d=\text{diag}(\lambda_1,...,\lambda_d)$.

\end{enumerate}

Here, $d$ shows the dimensionality of the Minimax vectors, i.e. the number of Minimax features. The Minimax dimensions are ordered according to the respective eigenvalues and thereby we might choose only the first most representative ones, instead of taking them all.

\subsection{Embedding of collective Minimax pairwise matrices}\label{sec:minimax-collective}

We extend our generic framework to the cases where multiple pairwise Minimax matrices are available, instead of a single matrix. Then, the goal would be to find an embedding of the objects into a new space wherein their pairwise squared Euclidean distances are related to the collective Minimax distances over different Minimax matrices. Such a scenario might be interesting in several situations:

\begin{itemize}[leftmargin=*]
\item There might exist different type of relations between the objects, where each renders a separate graph. Then, we compute several pairwise Minimax distances, each for a specific relation.

\item Minimax distances might fail when for example few noise objects connect two compact classes. Then, the inter-class Minimax distances become very small, even if the objects from the two classes are connected via only few outliers. To solve this issue, similar to model averaging, one could use the  higher order, i.e. the second, third, ...  Minimax distances, e.g., the second smallest maximal gap. Then, there will be multiple pairwise Minimax matrices each representing the $k^{th}$ Minimax distance.

\item In many real-world applications, we encounter high-dimensional data, the patterns might be hidden in some unknown subspace instead of the whole space, such that they are disturbed in the high-dimensional space due to the curse of dimensionality. Minimax distances rely on the existence of well-connected paths, whereas such paths might be very sparse or fluctuated in high dimensions.
Thereby, similar to ensemble methods, it is natural to seek for connectivity paths and thus for Minimax distances in some subspaces of the original space.
~\footnote{An alternative approach would be to perform dimension reduction via e.g. Principle Component Analysis (PCA) and so on. Our approach can be easily extended to such cases too.}
However, investigating all the possible subspaces is computationally intractable as the respective cardinality scales exponentially with the number of dimensions. Hence, we
propose an approximate approach based on computing Minimax distances for each dimension, which leads to having multiple Minimax matrices.
\end{itemize}

In all the aforementioned cases, we need to deal with (let say $M$) different matrices of Minimax distances computed for the same set of objects. Then, the next step is to investigate the existence of an embedding that represents the pairwise collective Minimax distances. To proof the existence of such an embedding for $M=1$, we used the \emph{ultrametric} property of the respective Minimac distances and then concluded the positive definiteness. However, as shown in Theorem~\ref{theorem:MinimaxEmbeddingSum},  the sum of $M>1$ Minimax matrices does not necessarily satisfy the \emph{ultrametric} property.

\begin{theorem}
Given $M>1$ pairwise Minimax matrices $\mathbf D^{MM}(m), m\in\{1,..,M\}$ for the same set of objects $\mathbf O$, then, the accumulative Minimax matrix $\mathbf D^{aMM} = \mathbf D^{MM}(1) + ... + \mathbf D^{MM}(m)$ does not necessarily constitute an ultrametric.
\label{theorem:MinimaxEmbeddingSum}
\end{theorem}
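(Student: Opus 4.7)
The plan is to disprove the claim by producing a small, explicit counterexample: I will exhibit two Minimax matrices, each of which is individually an ultrametric (as guaranteed by Theorem~\ref{theorem:MinimaxEmbedding}), whose entrywise sum violates the strong triangle inequality. Since ultrametricity is a universal statement over all triples, a single offending triple suffices.

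The construction I have in mind uses just three objects $\mathbf O = \{i,j,k\}$ and two ``complementary'' weightings of the underlying graph. In the first graph, take the edge $(i,j)$ very light and the remaining edges heavy, so that the resulting Minimax matrix $\mathbf D^{MM}(1)$ has a small value on the $(i,j)$ pair and a large (equal) value on $(i,k)$ and $(j,k)$. In the second graph, swap the role of the light edge so that $\mathbf D^{MM}(2)$ is small on $(j,k)$ and large on the other two pairs. Concretely, one can take
\begin{equation}
\mathbf D^{MM}(1) = \begin{pmatrix} 0 & 1 & 5 \\ 1 & 0 & 5 \\ 5 & 5 & 0 \end{pmatrix}, \qquad \mathbf D^{MM}(2) = \begin{pmatrix} 0 & 5 & 5 \\ 5 & 0 & 1 \\ 5 & 1 & 0 \end{pmatrix},
\end{equation}
both of which are easily checked to be ultrametrics (and each is realized by a three-vertex graph whose edge weights coincide with the entries above, so Theorem~\ref{theorem:PathGraphMST} confirms they are genuine Minimax matrices).

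The entrywise sum is
\begin{equation}
\mathbf D^{aMM} = \begin{pmatrix} 0 & 6 & 10 \\ 6 & 0 & 6 \\ 10 & 6 & 0 \end{pmatrix},
\end{equation}
and the triple $(i,k,j)$ violates the strong triangle inequality: $\mathbf D^{aMM}_{ik} = 10 > 6 = \max\{\mathbf D^{aMM}_{ij}, \mathbf D^{aMM}_{jk}\}$. Hence $\mathbf D^{aMM}$ is not an ultrametric, which proves the theorem.

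I do not foresee a genuine obstacle here; the only care needed is to ensure that each $\mathbf D^{MM}(m)$ is realizable as the Minimax matrix of some graph on $\mathbf O$ (rather than an arbitrary ultrametric picked from thin air), which is immediate by taking the graph whose edge weights coincide with the ultrametric entries themselves. One could additionally remark, as an informative side comment, why the ultrametric property breaks under addition: unlike the $\max$ operation, summation does not preserve the ``dominant edge'' structure, so two ultrametrics that disagree about which pair is the ``close'' one can jointly push a third pair above the maximum.
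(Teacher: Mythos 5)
Your proposal is correct and takes essentially the same route as the paper: both prove the claim by a three-object counterexample in which the two Minimax matrices disagree about which pair is the ``close'' one, so that the entrywise sum violates the strong triangle inequality $\mathbf D^{aMM}_{ij} \le \max\{\mathbf D^{aMM}_{ik}, \mathbf D^{aMM}_{kj}\}$. Your explicit $3\times 3$ matrices (with the check that each is realized as the Minimax matrix of the graph carrying those edge weights) are a self-contained instance of the paper's figure-based example, which likewise yields $2a > a+b$ for the offending triple.
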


\begin{proof}
We investigate the \emph{ultrametric} conditions for  $\mathbf D^{aMM}$ (using the results of the analysis in Theorem 2):
\begin{enumerate}[leftmargin=*]

\item $\forall i,j: \mathbf D^{aMM}_{ij} = 0$, if and only if $i=j$. We investigate each of the conditions.\\ i) If $i=j$, then $\mathbf D^{aMM}_{ij} = 0$: Since we have $\mathbf D^{MM}_{ii}(1)= ..= \mathbf D^{MM}_{ii}(M) = 0$, then $\mathbf D^{aMM}_{ii}= \sum_{m=1}^{M} \mathbf D^{MM}_{ii}(m) = 0$.
\\ ii) If $\mathbf D^{aMM}_{ij}=0$, then $i=j$: According to our assumption, for $i \ne j$ we have  $\mathbf D_{ij}>0$ and thus $\mathbf D^{MM}_{ij}>0$. Therefore, $\mathbf D^{aMM}_{ij}>0$ if $i\ne j$, i.e. $\mathbf D^{aMM}_{ij}=0$ implies that $i=j$.

\item $\forall i,j: \mathbf D^{aMM}_{ij} \ge 0$. We have : $\forall 1\le m\le M, \mathbf D^{MM}_{ij}(m) \ge 0$. Thus, $$\mathbf D^{aMM}_{ij}= \sum_{m=1}^{M} \mathbf D^{MM}_{ij}(m) \ge 0.$$

\item $\forall i,j: \mathbf D^{aMM}_{ij} = \mathbf D^{aMM}_{ji}$. We have: $\forall 1\le m\le M, \mathbf D^{MM}_{ij} (m) = \mathbf D^{MM}_{ji}(m)$. Thus, $\sum_{m=1}^{M}\mathbf D^{MM}_{ij} (m) = \sum_{m=1}^{M}\mathbf D^{MM}_{ji}(m)$, i.e. $\mathbf D^{aMM}_{ij} = \mathbf D^{aMM}_{ji}$.

\item $\forall i,j,k: \mathbf D^{aMM}_{ij}  \le  \max\{\mathbf D^{aMM}_{ik},\mathbf D^{aMM}_{kj}\}$. $\forall 1\le m\le M$, we have $$\mathbf D^{MM}_{ij}(m)  \le  \max\{\mathbf D^{MM}_{ik}(m),\mathbf D^{MM}_{kj}(m)\}.$$ Then,
$$\sum_{m=1}^{M}\mathbf D^{MM}_{ij}(m)  \le \sum_{m=1}^{M}  \max\{\mathbf D^{MM}_{ik}(m),\mathbf D^{MM}_{kj}(m)\}.$$
However, we need to show that
$$\sum_{m=1}^{M}\mathbf D^{MM}_{ij}(m)  \le \max\{\sum_{m=1}^{M} \mathbf D^{MM}_{ik}(m),\sum_{m=1}^{M}\mathbf D^{MM}_{kj}(m)\}.$$
Thus, if we can approximate $\max\{\sum_{m=1}^{M} \mathbf D^{MM}_{ik}(m),\sum_{m=1}^{M} \mathbf D^{MM}_{kj}(m)\}$ by\\ $\sum_{m=1}^{M}  \max\{\mathbf D^{MM}_{ik}(m),\mathbf D^{MM}_{kj}(m)\}$, then, this condition is satisfied too and thereby $\mathbf D^{aMM}$ induces an \emph{ultrametric}.
\\ However, in general this approximation is not valid and one can find cases where $\mathbf D^{aMM}_{ij}  >  \max\{\mathbf D^{aMM}_{ik},\mathbf D^{aMM}_{kj}\}$. Here, we show an example. Let fix $M=2$ and consider the datasets in  Figure~\ref{fig:aMMfail}. The two datasets represent the same set of objects. They differ only in the position of the $k^{th}$ object and the other objects are fixed among the two datasets. In both datasets, the Minimax distance between $i$ and $j$ is equal to $a$, i.e. $\sum_{m=1}^{M}\mathbf D^{MM}_{ij}(m) = 2a$. On the other hand, we have $\sum_{m=1}^{M}\mathbf D^{MM}_{ik}(m)=b+a$,  and similarly $\sum_{m=1}^{M}\mathbf D^{MM}_{kj}(m)= a +b$. Thus, $\sum_{m=1}^{M}\mathbf D^{MM}_{ij}(m) = 2a  > \max\{\sum_{m=1}^{M} \mathbf D^{MM}_{ik}(m),\sum_{m=1}^{M} \mathbf D^{MM}_{kj}(m)\}= a+b$, i.e., $\mathbf D^{aMM}$ does not induce an \emph{ultrametric}.
\end{enumerate}
\end{proof}

\begin{figure}[t!]
    \centering
    \subfigure[First representation.]
    {
        \includegraphics[width=0.26\textwidth]{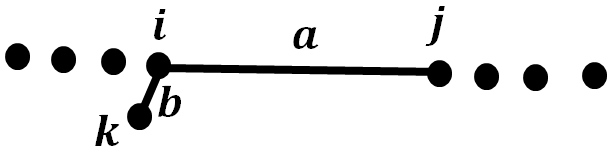}
        \label{fig:aMM1}
    }
    \hspace{3.1mm}
    \subfigure[Second representation.]
    {
        \includegraphics[width=0.26\textwidth]{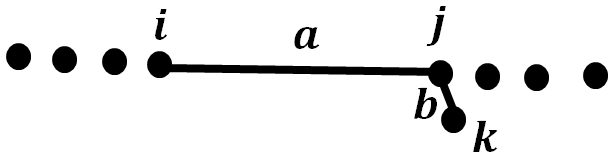}
        \label{fig:aMM2}
    }
    \caption{Two different representation of the same objects. The difference is only in the position of the $k^{th}$ object. Summing up the pairwise Minimax distances of these two datasets does not lead to an \emph{ultrametric}.}
    \label{fig:aMMfail}
\end{figure}

Theorem~\ref{theorem:MinimaxEmbeddingSum} indicates that the accumulative Minimax matrix $\mathbf D^{aMM}$ does not always yield an \emph{ultrametric}.
 However, we do not necessarily need the \emph{ultrametric} property to compute an embedding, rather it is only a sufficient condition.  Theorem~\ref{theorem:MinimaxEmbeddingCollective} suggests that after computing $\mathbf D^{MM}(m), m\in\{1,..,M\}$, first we center each of them via Eq.~\ref{Eq:centering} and then sum them up. The resultant matrix $\mathbf W^{cMM}$ is embeddable as it is positive definite.

\begin{theorem}
For a set of objects $\mathbf O$, we are given $M$ pairwise Minimax matrices  $\mathbf D^{MM}(m), m\in\{1,..,M\}$ and the respective centered matrices $\mathbf W^{MM}(m)$ defined as
\begin{equation}
\mathbf W^{MM}(m) = -\frac{1}{2}\mathbf{A} \mathbf D^{MM}(m) \mathbf{A}.
\end{equation}
Then, the collective matrix $\mathbf W^{cMM} = \sum_{m=1}^{M}\mathbf W^{MM}(m)$ induces an $\mathcal{L}_2^2$ embedding.
\label{theorem:MinimaxEmbeddingCollective}
\end{theorem}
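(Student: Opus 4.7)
The plan is to reduce the claim to the classical fact that a symmetric positive semidefinite (PSD) matrix can always be factored into the Gram matrix of a set of vectors, so that the pairwise squared Euclidean distances in that vector space coincide with the entries recovered from the PSD matrix. Concretely, I want to show that $\mathbf W^{cMM}$ is PSD; from there the embedding $\mathbf Y^{cMM}_d = \mathbf V_d (\boldsymbol{\Lambda}_d)^{1/2}$ obtained from the eigendecomposition $\mathbf W^{cMM} = \mathbf V \boldsymbol{\Lambda} \mathbf V^T$ produces vectors whose pairwise squared Euclidean distances reconstruct $\mathbf D^{aMM}$ (up to the standard centering identity relating $\mathbf W^{cMM}$ to the pairwise distances).

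First, I would invoke Theorem~\ref{theorem:MinimaxEmbedding}: each individual matrix $\mathbf D^{MM}(m)$ is ultrametric, hence the centered matrix $\mathbf W^{MM}(m) = -\tfrac{1}{2}\mathbf A \mathbf D^{MM}(m) \mathbf A$ is PSD, as this is exactly the classical multidimensional-scaling construction which yields a Mercer kernel whenever the starting distance admits an $\mathcal{L}_2^2$ embedding. So the proof of Theorem~\ref{theorem:MinimaxEmbedding} already furnishes, for free, the PSD property of every $\mathbf W^{MM}(m)$.

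Second, I would use the elementary closure property that the cone of PSD matrices is closed under addition: if $\mathbf W^{MM}(m) \succeq 0$ for each $m$, then for every $\mathbf x \in \mathbb{R}^N$,
\begin{equation}
\mathbf x^T \mathbf W^{cMM} \mathbf x \;=\; \sum_{m=1}^{M} \mathbf x^T \mathbf W^{MM}(m) \mathbf x \;\ge\; 0,
\end{equation}
so $\mathbf W^{cMM} \succeq 0$. Symmetry of $\mathbf W^{cMM}$ follows from symmetry of each summand, and each summand is symmetric because $\mathbf A = \mathbf A^T$ and $\mathbf D^{MM}(m)$ is symmetric.

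Third, having established that $\mathbf W^{cMM}$ is symmetric and PSD, I would apply the spectral theorem to write $\mathbf W^{cMM} = \mathbf V \boldsymbol{\Lambda} \mathbf V^T$ with $\boldsymbol{\Lambda}$ a diagonal matrix of nonnegative eigenvalues, then set $\mathbf Y^{cMM} = \mathbf V \boldsymbol{\Lambda}^{1/2}$. This gives an $\mathcal{L}_2^2$ embedding of $\mathbf O$ in the sense that $\mathbf W^{cMM} = \mathbf Y^{cMM} (\mathbf Y^{cMM})^T$, and standard MDS identities recover the associated pairwise squared Euclidean distances directly from $\mathbf W^{cMM}$. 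I do not foresee a serious obstacle here: the only subtlety is being explicit that centering each matrix before summing is essential (summing the raw $\mathbf D^{MM}(m)$'s, as shown in Theorem~\ref{theorem:MinimaxEmbeddingSum}, destroys the ultrametric property and hence the direct route to PSD), but once centering is done per-matrix the PSD property is preserved additively and the embedding follows mechanically.
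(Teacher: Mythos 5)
Your proposal is correct and follows essentially the same route as the paper: invoke Theorem~\ref{theorem:MinimaxEmbedding} to get positive (semi)definiteness of each centered matrix $\mathbf W^{MM}(m)$, use closure of the PSD cone under addition to conclude $\mathbf W^{cMM}\succeq 0$, and then read off the embedding from the eigendecomposition. You merely make explicit a few steps the paper leaves implicit (the quadratic-form argument for the sum, symmetry, and the MDS identity recovering $\mathbf D^{aMM}$), which is fine.
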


\begin{proof}
According to Theorem \ref{theorem:MinimaxEmbedding}, each $\mathbf D^{MM}(m)$ yields an $\mathcal{L}_2^2$ embedding, which implies that $\mathbf W^{MM}(m)$ is positive definite \cite{torgerson1958theory,RePEc1938}.
On the other hand, sum of multiple positive definite matrices (i.e. $\mathbf W^{cMM}$) is a positive definite  matrix too \cite{Horn:MA:5509}. Thus, the  eigenvalues of $\mathbf W^{cMM}$ are non-negative and it induces an  $\mathcal{L}_2^2$ embedding.
\end{proof}

Thereby, instead of summing the $\mathbf D^{MM}(m)$ matrices, we sum up the $\mathbf W^{MM}(m)$'s and then perform eigenvalue decomposition, to compute an embedding wherein the pairwise squared Euclidean distances correspond to the collective pairwise Minimax distances. As will be mentioned in the experiments, we perform the above computations on the squared pairwise base dissimilarities .

\textbf{Efficient calculation of dimension-specific Minimax distances.}
In this paper, we particularly study the use of collective Minimax embedding for high-dimensional data (called the \emph{dimension-specific} variant). The  dimension-specific variant may require computing pairwise Minimax distances for each dimension, which can be computationally expensive. However, in this setting, the objects stay in an one-dimensional space.
A main property of one-dimensional data is that sorting them immediately gives a minimum spanning tree. We, then, compute the pairwise distances for each pair of consecutive objects in the sorted list to obtain the edge weights of the minimum spanning tree. Finally, we compute the pairwise Minimax distances from the minimum spanning tree.

\section{One-To-All Minimax Distance Measures}\label{sec:knn-minimax}

In this section, we study computing Minimax distances from a fixed (test) object which can be in particular used in $K$-nearest neighbor search. In this setup, we are given a graph $\mathcal G(\mathbf O,\mathbf D)$ (as the training dataset), and a new object (node) $v$. $\mathbf O$ indicates a set of $N$ (training) objects with the  corresponding pairwise dissimilarities $\mathbf D$, i.e., $\mathbf D$ denotes the weights of the edges in $\mathcal G$.
The goal is to compute the $K$ objects in $\mathbf O$ whose Minimax distances from $v$ are not larger than any other object in $\mathbf O$. We assume there is a function $d(v,\mathbf O)$ which computes the pairwise dissimilarities between $v$ and all the objects in $\mathbf O$. Thereby, by adding $v$ to $\mathcal G$ we obtain the graph $\mathcal G^+(\mathbf O \cup v, \mathbf D\cup d(v,\mathbf O))$.

\subsection{Minimax $K$-nearest neighbor search}

We introduce an incremental approach for computing the Minimax $K$-nearest neighbors of the new object $v$, i.e. we obtain the $(T+1)^{th}$ Minimax neighbor of $v$ given the first $T$ Minimax neighbors. Thereby, first, we define the \emph{partial neighbors} of $v$ as the first Minimax neighbors of $v$ over graph $\mathcal G^+(\mathbf O \cup v, \mathbf D\cup d(v,\mathbf O))$, i.e.,

\begin{eqnarray}
	\mathcal {NP}_T(v) &=& \biggl \{ \{i\} \subseteq \mathbf O: |\{i\}| = T\;,
	 \nexists j \in \{\mathbf O\setminus \mathcal {NP}_T(v)\}: \mathbf D^{MM}_{vj} < \mathbf D^{MM}_{vi}\biggr\}.
\end{eqnarray}

Theorem~\ref{theorem:NextPathNN} provides a way to extend $\mathcal {NP}_T(v)$ step by step until it contains the $K$ nearest neighbors of $v$ according to  Minimax distances.

\begin{theorem}
Given the (traning) graph $\mathcal G(\mathbf O, \mathbf D)$ and a test node $v$, assume we have already computed the first $T$ Minimax neighbors of $v$ (i.e., the set $\mathcal {NP}_T(v)$). Then, the node with the minimal distance to the set $\{v\cup\mathcal {NP}_T(v)\}$ gives the $(T+1)^{th}$ Minimax nearest neighbor of $v$.\footnote{Given a set $S \subset \mathbf O$, the distance of object $i \in \{\mathbf O \setminus S\} $ to $S$ is obtained by the minimum of the distances between $i$ and the objects in $S$.}
\label{theorem:NextPathNN}
\end{theorem}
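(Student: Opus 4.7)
The plan is a cut-style argument in the spirit of Prim's correctness proof, using Theorem~\ref{theorem:PathGraphMST} implicitly to reason about Minimax distances as bottlenecks of paths in $\mathcal{G}^+$. Write $S = \{v\} \cup \mathcal{NP}_T(v)$ and let $\delta = \min_{w \in S,\, u \notin S} \mathbf{D}_{wu}$ be the minimum weight across the cut, attained by a pair $(w^*, u^*)$; this $u^*$ is exactly the candidate prescribed by the theorem. First I would establish a uniform lower bound: for every $u \in \mathbf{O} \setminus S$, any path from $v$ to $u$ in $\mathcal{G}^+$ must contain at least one edge with one endpoint in $S$ and the other in its complement, and every such crossing edge has weight $\geq \delta$ by definition. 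The largest weight on any $v$-to-$u$ path is therefore at least $\delta$, so $\mathbf{D}^{MM}_{vu} \geq \delta$.

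Next I would construct a matching upper bound on $\mathbf{D}^{MM}_{v u^*}$. Concatenating a Minimax-optimal $v$-to-$w^*$ path (whose bottleneck equals $\mathbf{D}^{MM}_{v w^*}$) with the single edge $(w^*, u^*)$ of weight $\delta$ produces a $v$-to-$u^*$ walk whose largest edge is $\max(\mathbf{D}^{MM}_{v w^*},\, \delta)$, hence $\mathbf{D}^{MM}_{v u^*} \leq \max(\mathbf{D}^{MM}_{v w^*},\, \delta)$. Now, letting $n_{T+1}$ denote any true $(T+1)$-th Minimax nearest neighbor of $v$, I can bound both terms: because $w^* \in \mathcal{NP}_T(v)$, its Minimax distance is among the $T$ smallest and is therefore $\leq \mathbf{D}^{MM}_{v n_{T+1}}$; and applying the lower bound with $u = n_{T+1} \in \mathbf{O} \setminus S$ yields $\delta \leq \mathbf{D}^{MM}_{v n_{T+1}}$. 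Combining, $\mathbf{D}^{MM}_{v u^*} \leq \mathbf{D}^{MM}_{v n_{T+1}}$.

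To finish, I would note that since $u^* \in \mathbf{O} \setminus \mathcal{NP}_T(v)$ and $n_{T+1}$ is by definition the element of $\mathbf{O} \setminus \mathcal{NP}_T(v)$ with smallest Minimax distance to $v$, the reverse inequality $\mathbf{D}^{MM}_{v u^*} \geq \mathbf{D}^{MM}_{v n_{T+1}}$ is automatic. Hence the two are equal and $u^*$ qualifies as the $(T+1)$-th Minimax nearest neighbor (uniquely, under the standing assumption that distinct pairwise dissimilarities are positive and generically distinct). The main obstacle I anticipate is a clean handling of the fact that the Minimax-optimal $v$-to-$w^*$ path need not lie entirely inside $S$; this is sidestepped by observing that the upper-bound argument needs only the \emph{existence} of one $v$-to-$u^*$ walk with bottleneck $\max(\mathbf{D}^{MM}_{v w^*}, \delta)$, which is guaranteed in $\mathcal{G}^+$ irrespective of where the optimal $v$-to-$w^*$ path runs. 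A small subtlety is also ensuring $n_{T+1}$ is a valid choice in the lower bound (i.e.\ lies outside $S$), which follows directly from the definition of $\mathcal{NP}_T(v)$ as the $T$ Minimax-closest objects.
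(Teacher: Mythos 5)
Your proof is correct, and it reaches the result by a cleaner route than the paper. The shared core is the cut argument: every path from $v$ to an unselected node must cross the cut between $S=\{v\}\cup\mathcal{NP}_T(v)$ and its complement, and the lightest crossing edge has weight $\delta$, so $\mathbf D^{MM}_{vu}\ge\delta$ for all $u\notin S$ --- this is exactly the paper's step 2(b). Where you diverge is in the decomposition: the paper splits into two cases according to whether $\mathbf D_{p,u}\le\mathbf D^{MM}_{v,p}$ or not, asserting in the first case that $u$ is ``naturally a valid choice'' and in the second arguing separately that no path from $v$ to $u$ beats $\mathbf D_{p,u}$; you instead absorb both cases into the single concatenation bound $\mathbf D^{MM}_{vu^*}\le\max(\mathbf D^{MM}_{vw^*},\delta)$ and then close the argument by comparing against an explicitly named true $(T+1)$-th neighbor $n_{T+1}$, which the paper never introduces. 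Your version is the more rigorous of the two: it makes explicit why the selected node's Minimax distance does not exceed that of \emph{any} competitor (the paper's case 1 leaves this implicit, and its case 2(a) justification via a hypothetical node $p'$ is rather opaque), and your remark that the walk through $w^*$ need not stay inside $S$ addresses a point the paper silently skips. The only thing the paper's case split buys is expository reuse: the dichotomy $\mathbf D_{p,u}\le\mathbf D^{MM}_{v,p}$ versus $\mathbf D_{p,u}>\mathbf D^{MM}_{v,p}$ is invoked again later for the MST discussion and the outlier-detection heuristic, so the paper has a structural reason to phrase the proof that way. One cosmetic gap in yours: $w^*$ may be $v$ itself (necessarily so when $T=0$), in which case the bound $\mathbf D^{MM}_{vw^*}\le\mathbf D^{MM}_{vn_{T+1}}$ holds trivially since $\mathbf D^{MM}_{vv}=0$; worth a half-sentence, since you justify it only for $w^*\in\mathcal{NP}_T(v)$.
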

\begin{proof}
Let us call this new (potential) neighbor $u$. Let $e_{u*}$ indicate the edge (with the smallest weight) connecting $u$ to $\{v \cup\mathcal{NP}_T(v)\}$ and $p \in \{v \cup \mathcal{NP}_T(v)\}$ denote the node at the other side of $e_{u*}$.
We consider two cases:

\begin{enumerate}[leftmargin=*]
\item $\mathbf D_{p,u} \le \mathbf D^{MM}_{v,p}$, i.e. the weight of $e_{u*}$ is not larger than the Minimax distance between $v$ and $p$. Then, $\mathbf D^{MM}_{v,u} = \mathbf D^{MM}_{v,p}$ and therefore $u$ is naturally a \emph{valid} choice.

\item $\mathbf D_{p,u} > \mathbf D^{MM}_{v,p}$, then we show that there exist no other unselected node $u'$ (i.e. from $\{\mathbf O \setminus \mathcal{NP}_T(v)\}$)  which has a smaller Minimax distance to $v$ than $u$.
Thereby:

(a) $D^{MM}_{v,u} > \mathbf D^{MM}_{v,p}$ (according to the assumption $\mathbf D_{p,u} > \mathbf D^{MM}_{v,p}$). Moreover, it can be shown that there is no other path from $v$ to $u$ whose maximum weight is smaller than $\mathbf D_{p,u}$. Because, if such a path exists, then there must be a node $p' \in \mathcal{NP}_T(v)$ which has a smaller distance to an external node like $u'$. This leads to a contradiction since $u$ is the closest neighbor of $\mathcal{NP}_T(v)$.

(b) For any other unselected node $u'$ we  have  $\mathbf D^{MM}_{v,u'} \ge \mathbf D_{p,u}$. Because computing the Minimax distance between $v$ and $u'$ requires visiting an edge whose one side is inside $\mathcal{NP}_T(v)$, but the other side is outside $\mathcal{NP}_T(v)$. Among such edges, $e_{u*}$ has the minimal weight, therefore, $\mathbf D^{MM}_{v,u'}$ cannot be smaller than $\mathbf D_{p,u}$.

Finally, from (a) and (b) we conclude that $\mathbf D^{MM}_{v,u} \le \mathbf D^{MM}_{v,u'}$ for any unselected node $u' \neq u$. Hence, $u$ has the smallest Minimax distance to $v$ among all unselected nodes.
\end{enumerate}
\end{proof}

Theorem~\ref{theorem:NextPathNN} proposes a dynamic programming approach to compute the Minimax $K$-nearest neighbors of $v$.
Iteratively, at each step, we obtain the next Minimax nearest neighbor by selecting the external (unselected) node $u$ which has the minimum distance to the nodes in $\{v\cup\mathcal {NP}_T(v)\}$. This procedure is repeated for $K$ times.  Algorithm~\ref{alg:PathNN} describes the steps in detail. Since the algorithm performs based on finding the nearest unselected node, therefore vector $dist$ is used to keep the minimum distance of unselected nodes to (one of the members of) $\{v \cup\mathcal{NP}_T(v)\}$. Thereby, the algorithm finds a new neighbor in two steps: i) \emph{extension}: it picks the minimum of $dist$ and adds the respective node to $\mathcal{NP}_T(v)$, and ii) \emph{update}: it updates $dist$ by checking if an unselected node has a smaller distance to the new $\mathcal{NP}_T(v)$. Thus $dist$ is updated by $dist = min(dist, \mathbf D_{min\_ind,:})$, except for the members of $\mathcal{NP}_T(v)$.

\begin{algorithm}[!ht]
\caption{Calculation of Minimax $K$-nearest neighbor}
\label{alg:PathNN}
\begin{algorithmic} [1]
\REQUIRE {Graph $\mathcal{G^+}(\mathbf O\cup v, \mathbf D\cup d(v,\mathbf O))$ including a test object $v$.}
\ENSURE A list of $K$ Minimax nearest neighbors of $v$ stored in $\mathcal{NP}(v)$.
\vspace{2mm}

\STATE Initialize vector \emph{dist} by the distance of $v$ to each object in $\mathbf O$.
\STATE $\mathcal{NP}(v) = []$

\FOR{$T=1$ \textbf{to} $K$}
	\STATE\COMMENT{extension}
	\STATE $min\_ind = \arg \min(dist)$
	\STATE $\mathcal{NP}(v).append(min\_ind)$

	\STATE\COMMENT{update}
	\STATE  $dist_{min\_ind} = inf$
	\FOR {$i \notin \mathcal{NP}(v)$}
		\IF {$\mathbf D_{new\_ind,i} < dist_i$}
			\STATE $dist_i =  \mathbf D_{min\_ind,i}$
		\ENDIF
	\ENDFOR
	
\ENDFOR

\STATE \textbf{return} $\mathcal{NP}(v)$
\end{algorithmic}
\end{algorithm}

\paragraph{Computational complexity.}
Each step of Algorithm~\ref{alg:PathNN}, either the extension or the update, requires an $\mathcal O(N)$ running time. Therefore the total complexity is $\mathcal O(N)$ which is the same as for the standard $K$-nearest neighbor method.
The standard $K$-nearest neighbor algorithm computes only the first step. Thus our algorithm only adds a second update step, thereby it is more efficient than the message passing method~\cite{KimC07icml} that requires more visits of the objects and also builds a complete minimum spanning tree in advance on $\mathcal G(\mathbf O,\mathbf D)$.

\subsection{Minimax $K$-NN search, the Prim's algorithm and computational optimality}

According to Theorem \ref{theorem:PathGraphMST}, computing pairwise Minimax distances requires first computing a minimum spanning tree over the underlying graph. Thereby, here, we study the connection between Algorithm~\ref{alg:PathNN} (i.e., Minimax $K$-NN search) and minimum spanning trees. For this purpose, we first consider a general framework for constructing minimum spanning trees called \emph{generalized greedy algorithm} \cite{Gabow1986}.
Consider a forest (collection) of  trees $\left\{T_p\right\}$. The distance between the two trees $T_p$ and $T_q$ is obtained by
\begin{equation}
\Delta T_{pq} = \min_{i \in T_p} \min_{j \in T_q} \mathbf D_{ij} \,.
\end{equation}

The \emph{nearest neighbor tree} of  $T_p$, i.e.  $T_{p}^*$, is obtained by
\begin{equation}
T_{p}^* =\arg \min_{T_q}\Delta T_{pq}\; , \;\; q\ne p \,.
\end{equation}

Then, the edge $e_p^*$ represents  the nearest tree from $T_{p}$.
It can be shown that $e_p^*$ will belong to a minimum spanning tree on the graph as otherwise it yields a contradiction \cite{Gabow1986}.
This result provides a \emph{generic} way to compute a minimum spanning tree. A greedy MST algorithm at each step, i) picks two candidate (base) trees where one is the nearest neighbor of the other, and ii)  combines them via their shortest distance (edge) to build a larger tree. 

This analysis guarantees that Algorithm \ref{alg:PathNN} yields a minimum spanning subtree on $\{v \cup\mathcal{NP}_T(v)\}$ which is additionally a subset of a larger minimum spanning tree on the whole graph $\mathcal G^+(\mathbf O\cup v, \mathbf D\cup d(v,\mathbf O))$. In the context of \emph{generalized greedy algorithm}, Algorithm~\ref{alg:PathNN} generates the MST via growing only the tree started from $v$ and the other candidate trees are singleton nodes.
A complete minimum spanning tree would be constructed, if the attachment continues for $N$ steps, instead of $K$. This procedure, then, would yield a \emph{Prim} minimum spanning tree \cite{Prim1957}. This analysis reveals an interesting property of the \emph{Prim}'s algorithm:

\emph{The Prim's algorithm `sorts' the nodes based on their Minimax distances from/to the initial (test) node $v$}.

\paragraph{Computational optimality.} This analysis also reveals the `\emph{computational optimality}' of our approach (Algorithm~\ref{alg:PathNN}) compared to the alternative Minimax search methods, e.g. the method introduced in~\cite{KimC07icml}. Our algorithm \emph{always} expands the tree which contains the initial node $v$, whereas the alternative methods might sometimes combine the trees that do not have any impact on the Minimax $K$-nearest neighbors of $v$. In particular, the method in~\cite{KimC07icml} constructs a \emph{complete} MST, whereas we build a \emph{partial} MST with only and exactly $K$ edges. Any new node that we add to the partial MST, belongs to the $K$ Minimax nearest neighbors of $v$, i.e., we  do not investigate/add any unnecessary node to that.

\subsection{One-to-all Minimax distances and minimum spanning trees}

A straightforward generalization of Algorithm~\ref{alg:PathNN} gives sorted \emph{one-to-all} Minimax distances from the target (test) $v$ to the all other nodes. We only need to run the steps (extension and update) for $N$ times instead of $K$.
In Theorem \ref{theorem:PathGraphMST}, we observed that computing \emph{all-pair} Minimax distances is  consistent/equivalent with computing a minimum spanning tress, such that a MST provide all the necessary and sufficient edge weights for this purpose. Then later we have proposed an efficient algorithm for the \emph{one-to-all} problem that also yields computing a (partial) minimum spanning tree (according to Prim's algorithm). Therefore, here we study the following question:

\emph{Is it necessary to build a minimum spanning tree to compute the `one-to-all' Minimax distances, similar to `all-pair' Minimax distances?}

We reconsider the proof of Theorem~\ref{theorem:NextPathNN} which has led us to Algorithm~\ref{alg:PathNN}. The proof investigates two cases: i) $\mathbf D_{p,u} \le \mathbf D^{MM}_{v,p}$, ii) $\mathbf D_{p,u} > \mathbf D^{MM}_{v,p}$.  We investigate the first case in more detail.

Let us look at the largest Minimax distance selected up to step $T$, i.e., the edge with maximal weight whose both sides occur inside $\{v \cup\mathcal{NP}_T(v)\}$ and its weight represents a Minimax distance. We call this edge $e^{max}_T$.
Among different external neighbors of $\{v \cup\mathcal{NP}_T(v)\}$, any node $u' \in \{\mathbf O \setminus \mathcal{NP}_T(v)\}$ whose minimal distance to $\{v \cup\mathcal{NP}_T(v)\}$ does not exceed the weight of $e^{max}_T$ can be selected, even if it does not belong to a minimum spanning tree over $\mathcal G^+$. Because, anyway the Minimax distance will be still the weight of $e^{max}_T$.

In general, any node $u' \in \{\mathbf O \setminus \mathcal{NP}_T(v)\}$, whose Minimax distance to a selected member $p$, i.e. $\mathbf D^{MM}_{p,u}$,  is not larger than the weight of $e^{max}_T$, can be selected as the next nearest neighbor\footnote{We notice that in the second case, i.e. when $\mathbf D_{p,u} > \mathbf D^{MM}_{v,p}$, we need to add an unselected node whose Minimax to $v$ (to one of the nodes in $\mathcal{NP}_T(v)$) is equal to $\mathbf D_{p,u}$. This situation might also yield adding an edge which does not necessarily belong to  a MST over $\mathcal G^+$. The argument is similar to the case when $\mathbf D_{p,u} \le \mathbf D^{MM}_{v,p}$.}.
This is concluded from the following property of Minimax distances:
\begin{equation}
  \mathbf D^{MM}_{v,u}  \le  \max(\mathbf D^{MM}_{v,p},\mathbf D^{MM}_{p,u}) \, ,
\end{equation}
where $p$ is an arbitrary object (node).
In this setting we then have, \\
1. $\mathbf D^{MM}_{v,u}  \le  \max(\mathbf D^{MM}_{v,p},\mathbf D^{MM}_{p,u})$, and \\
2. $\mathbf D^{MM}_{v,p}>\mathbf D^{MM}_{p,u}$.
\\
Thus, we conclude  $\mathbf D^{MM}_{v,u}  \le  \max(\mathbf D^{MM}_{v,p})$, i.e. $\mathbf D^{MM}_{v,u}  =  \mathbf D^{MM}_{v,p}$.

An example is shown in Figure~\ref{fig:Analysis}, where $K$ is fixed at $2$. After computing the first nearest neighbor (i.e., $p$), the next one can be any of the remaining objects, as their Minimax distance to $v$ is the weight of the edge connecting $p$ to $v$ (Figure~\ref{fig:Analysis1}) or $p$ to $u$ (Figure~\ref{fig:Analysis2}). Thereby, one could choose a node such that the \emph{partial} MST on $\{v \cup\mathcal{NP}(v)\}$ is not a subset of any \emph{complete} MST on the whole graph.
Therefore, computing \emph{one-to-all} Minimax distances might not necessarily require taking into account  construction of a minimum spanning tree.

Thereby, such an analysis suggests an even more generic algorithm for computing  \emph{one-to-all} Minimax distances (including Minimax $K$-nearest neighbor search) which dose not necessarily yield a MST on the entire graph. To expand $\mathcal{NP}_T(v)$, we add a new $u'$ whose \emph{Minimax distance} (not the direct dissimilarity) to $\{v \cup\mathcal{NP}_T(v)\}$ is minimal.
Algorithm~\ref{alg:PathNN} is only one way, but an efficient way, that even sorts one-to-all Minimax distances.

\begin{figure}[t!]
    \centering
    \subfigure[$\mathbf D_{p,u} \le \mathbf D^{MM}_{v,p}$]
    {
        \includegraphics[width=0.184\textwidth]{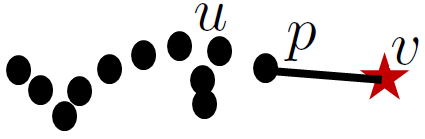}
        \label{fig:Analysis1}
    }
    \hspace{10mm}
    \subfigure[$\mathbf D_{p,u} > \mathbf D^{MM}_{v,p}$]
    {
        \includegraphics[width=0.16\textwidth]{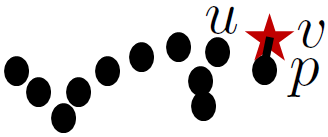}
        \label{fig:Analysis2}
    }
    \caption{Computing Minimax nearest neighbors (of the \emph{red} object) does not necessarily need to agree with a complete MST on $\mathcal G^+$, as any remaining node can be replaced with $u$.}
    \label{fig:Analysis}
\end{figure}

\subsection{Outlier detection with Minimax $K$-nearest neighbor search}

While performing $K$-nearest neighbor search, the test objects might not necessarily comply with the structure in the train dataset, i.e. some of the test objects might be outliers or belong to other classes than those existing in the train dataset. More concretely, when computing the $K$ nearest neighbors of $v$, we might realize that $v$ is an outlier or irrelevant with respect to the underlying structure in $\mathbf O$. We study this case in more detail and propose an efficient algorithm which while computing Minimax $K$ nearest neighbors, it also detects whether $v$ could be an outlier.  Notice that the ability to detect the outliers will be an additional feature of our algorithm, i.e., we do not aim at proposing the best outlier detection method, rather, the goal is to maximize the benefit from performing Minimax $K$-NN while still having an $\mathcal O(N)$ runtime.

Thereby, we follow our analysis by investigating another special aspect of Algorithm~\ref{alg:PathNN}: as we extend $\mathcal{NP}_T(v)$, the edge representing the Minimax distance between $v$ and the new member $u$ (i.e., the edge with the largest weight on the path indicating the Minimax distance between $v$ and $u$) is always connected to $v$.
For this purpose, we reconsider the proof of Theorem~\ref{theorem:NextPathNN}. When $u$ is being added to $\mathcal{NP}_T(v)$, two special cases might occur:
\begin{enumerate}
\item $u$ is directly connected to $v$, i.e. $p=v$.
\item $u$ is not directly connected to $v$ (i.e. $p\ne v$), but its Minimax distance to $v$ is \emph{not} represented by $\mathbf D_{p,u}$, i.e., $\mathbf D_{p,u} < \mathbf D^{MM}_{v,p}$.
\end{enumerate}

Figure~\ref{fig:OutlierDemonstration} illustrates these situations, where $K=4$. Among the nearest neighbors, two are directly connected to $v$ (i.e., $p=v$), and the two others are connected to $v$ via the early members of $\mathcal{NP}_T(v)$, but the Minimax distance is still represented by the edges connected to $v$. In other words, there is no edge in graph $\mathcal G(\mathbf O,\mathbf D)$ which represents a Minimax distance, although some edges of $\mathcal G(\mathbf O,\mathbf D)$ are involved in Minimax paths. Thus, the type of connection between $v$ and its neighbors is \emph{different} from the type of the distances inside graph $\mathcal G(\mathbf O,\mathbf D)$ (without test object $v$). Thereby, in this case we report $v$ as an \emph{outlier}. Notice that both of the above mentioned conditions should occur, i.e. if we have always $p=v$ (as shown in Figure~\ref{fig:OutlierNOT1Demonstration} where the nearest neighbors are always connected to $v$), then $v$ might not be labeled as an \emph{outlier}.

On the other hand, an shown in Figure~\ref{fig:OutlierNOT2Demonstration}, if some edges of $\mathcal G(\mathbf O,\mathbf D)$ contribute in computing the Minimax $K$-nearest neighbors of $v$ (in addition to the edges which meet $v$), then $v$ has the same type of neighborhood as some  other nodes in $\mathcal G$. Thereby, it is \emph{not} labeled as an outlier.

This analysis suggests an algorithm for simultaneously computing Minimax $K$-nearest neighbors and detecting if $v$ is an outlier (Algorithm~\ref{alg:PathOutlier}).
For this purpose, we use a vector called \emph{updated}, which determines the type of nearest neighborhood of each external object to $\{v \cup\mathcal{NP}_T(v)\}$, i.e.,
\begin{enumerate}
\item $updated_i=0$, if $v$ is the node representing the nearest neighbor of $i$ in the set  $\{v \cup\mathcal{NP}_T(v)\}$.
\item  $updated_i=1$, otherwise.
\end{enumerate}

At the beginning, $\mathcal{NP}_T(v)$ is empty. Thereby, \emph{updated} is initialized by a vector of zeros. At each update step, whenever we modify an element of the vector \emph{dist}, we then set the corresponding index in \emph{updated} to $1$. At each step of Algorithm~\ref{alg:PathNN}, a new edge is added to the set of selected edges whose weight is $\mathbf D_{p,u}$.
As mentioned before, $v$ is labeled as an outlier if, i) some of the edges are directly connected to $v$, and some others indirectly, and ii) no indirect edge weight represents a Minimax distance, i.e. the  minimum of the edge weights directly connected to $v$ (stored in \emph{min\_direct}) is \emph{larger} than the maximum of the edge weights not connected to $v$ (stored in \emph{max\_indrct}).
The later corresponds to the condition that  $\mathbf D^{MM}_{v,p} > \mathbf D_{p,u}$. Thereby, whenever  we pick the nearest neighbor of  $\{v \cup\mathcal{NP}_T(v)\}$ in the extension step,  we then check the \emph{updated} status of the new member:
\begin{enumerate}
\item If $updated_{min\_ind}=1$, we then update \emph{max\_indrct} by $\max(max\_indrct,dist_{min\_ind})$.
\item If $updated_{min\_ind}=0$, we then update  \emph{min\_direct} by $\min(min\_direct,dist_{min\_ind})$.
\end{enumerate}

Finally, if $min\_direct > max\_indrct$ and $max\_indrct \ne -1$ ($max\_indrct$ is initialized by $-1$; this condition ensures that at least one indirect edge has been selected), then $v$ is labeled as an outlier. Algorithm~\ref{alg:PathOutlier} describes the procedure in detail. Compared to Algorithm~\ref{alg:PathNN}, we have added: i) steps 11-15 for keeping the statistics about the direct and indirect edge weights (i.e., via updating \emph{max\_indrct} and \emph{min\_direct}), ii) an additional step in line 20 for updating the type of the connection of the external nodes to the set $\{v\cup \mathcal{NP}_T(v) \}$, and iii) finally checking whether $v$ is an outlier\footnote{Our goal is not to propose the best outlier detection algorithm, rather, we aim at adding a new feature (i.e., detecting potential outliers) to Minimax $K$-nearest neighbor search while we require the runtime to be still $\mathcal O(N)$. Thus, we improve the gain of performing a linear-time Minimax $K$-NN search query.}.
\paragraph{Computational complexity.} The extension step is done in $\mathcal O(N)$ via a sequential search. Lines 11-15 are constant in time and the loop 17-22 requires an $\mathcal O(N)$ time. Thus, the total runtime is $\mathcal O(N)$ which is identical to standard $K$-nearest neighbor search over arbitrary graphs.

\begin{figure*}[htb]
    \centering
    \subfigure[outlier]
    {
        \includegraphics[width=0.25\textwidth]{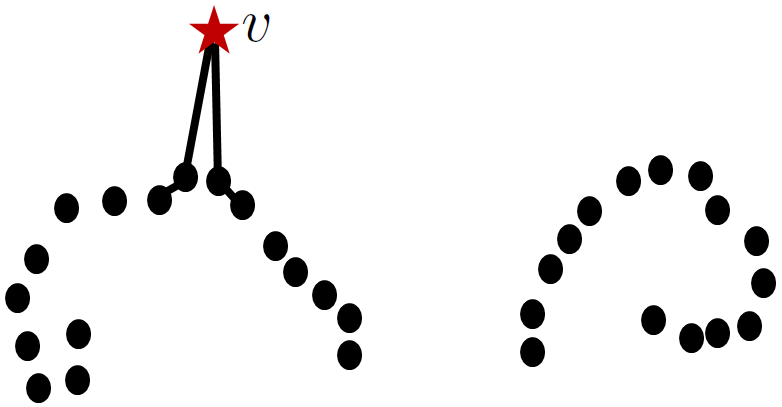}
        \label{fig:OutlierDemonstration}
    }
    \unskip\ \vrule\
    \subfigure[not outlier]
    {
        \includegraphics[width=0.25\textwidth]{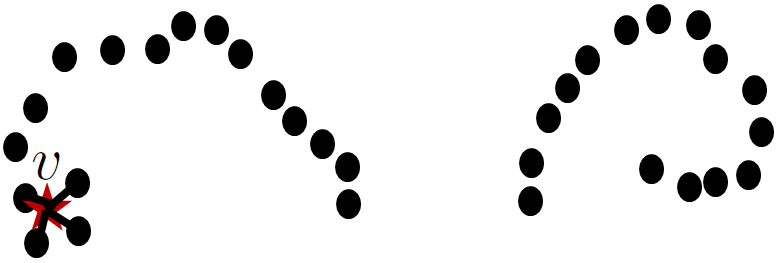}
        \label{fig:OutlierNOT1Demonstration}
    }
    \unskip\ \vrule\
    \subfigure[not outlier]
    {
        \includegraphics[width=0.25\textwidth]{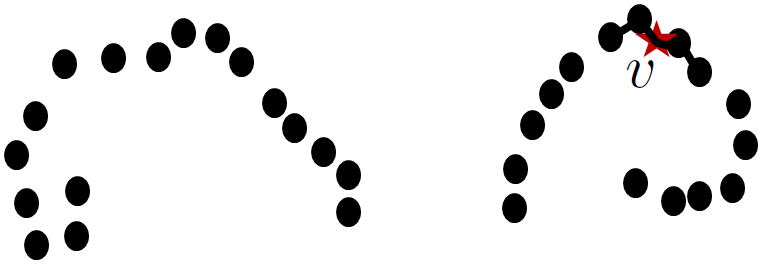}
        \label{fig:OutlierNOT2Demonstration}
    }
    \caption{The edges representing the Minimax distances can provide useful information for detecting outliers.}
    \label{fig:OutlierOrNot}
\end{figure*}

\begin{algorithm}[!ht]
\caption{Outlier detection}
\label{alg:PathOutlier}
\begin{algorithmic} [1]
\REQUIRE {Graph $\mathcal{G^+}(\mathbf O\cup v, \mathbf D\cup d(v,\mathbf O))$ including a test object $v$.}
\ENSURE Minimax $K$ nearest neighbors of $v$ stored in $\mathcal{NP}(v)$, detecting whether $v$ is an outlier.
\vspace{2mm}

\STATE $\mathcal{NP}(v) = []$
\STATE Initialize vector \emph{dist} by the distance of $v$ to each object in $\mathbf O$.
\STATE Initialize vector \emph{updated} by $0$.
\STATE $max\_indrct = -1$
\STATE $min\_direct = inf$

\FOR{$T=1$ \textbf{to} $K$}
	\STATE\COMMENT{extension}
	\STATE $min\_ind = \arg \min(dist)$
	\STATE $\mathcal{NP}(v).append(min\_ind)$

	\STATE\COMMENT{update}	
	\IF {$updated_{min\_ind}=1$}
		\STATE \emph{max\_indrct} $=\max(max\_indrct,dist_{min\_ind})$
	\ELSE
		\STATE \emph{min\_direct} $=\min(min\_direct,dist_{min\_ind})$
	\ENDIF
	
	\STATE  $dist_{min\_ind} = inf$
	\FOR {$i \notin \mathcal{NP}(v)$}
		\IF {$\mathbf D_{min\_ind,i} < dist_i$}
			\STATE $dist_i =  \mathbf D_{min\_ind,i}$
			\STATE $updated_i = 1$
		\ENDIF
	\ENDFOR
	
\ENDFOR

\STATE $is\_outlier = 0$
\IF{$max\_indrct\neq -1$ \textbf{and} $min\_direct>max\_indrct$}
	\STATE $is\_outlier = 1$
\ENDIF

\STATE \textbf{return} $\mathcal{NP}(v)$, \emph{is\_outlier}
\end{algorithmic}
\end{algorithm}

\section{Experiments}\label{sec:experiments}
We experimentally study the performance of Minimax distance measures on a variety of synthetic and real-world datasets and illustrate how the use of Minimax distances as an intermediate step improves the results. In each dataset, each object is represented by a vector.\footnote{When not mentioned, we compute the pairwise squared Euclidean distances between the vectors to construct the base distance matrix $\mathbf D$.}

\subsection{Classification with Minimax distances}
First, we study classification with Minimax distances.
We use Logistic Regression (LogReg) and Support Vector Machines (SVM) as the baseline methods and investigate how performing these methods on the vectors induced from Minimax distances improves the accuracy of prediction.
With SVM, we examine three different kernels: i. linear (lin), ii. radial basis function (rbf), and iii. sigmoid (sig), and choose the best result. With Minimax distances, we only use the linear kernel, since we assume that Minimax distances must be able to capture the correct classes, such that they can be then discriminated via a linear separator.

\noindent\textbf{Experiments with synthetic data}

We first perform our experiments on two synthetic datasets, called: i) DS1 \cite{Chang:2008:RPS}, and ii) DS2~\cite{Veenman02amaximum},  which are shown in Figure~\ref{fig:Datasets_synthetic}. The goal is to demonstrate the superior ability of Minimax distances to capture the correct class-specific structures, particularly when the classes have different types (DS2), compared to kernel methods.
Table~\ref{table:accuracy_synthetic} shows the accuracy scores for different methods. The standard SVM is performed with three different kernels (lin, rbf and sig), and the best choice which is the rbf kernel is shown. As mentioned, with Minimax distances, we only use the linear kernel.
We observe that performing classification on Minimax vectors yields the best results, since it enables the method to better identify the correct classes. The datasets differ in the type and consistency of the classes. DS1 contains very similar classes which are Gaussian. But DS2 consists of classes which differ in shape and type. Therefore, for DS1 we are able to find an optimal kernel (rbf, since the classes are Gaussian) with a \emph{global} form and parametrization, which fits with the data and thus yields very good results. However, in the case of DS2, since classes have different shapes, then a single  kernel \emph{is not} able to capture correctly all of them. For this dataset, LogReg and SVM with Minimax vectors perform better, since they enable to adapt to the class-specific structures.  Note that in the case of DS1, using Minimax vectors is equally good to using the optimal rbf kernel. Remember that, to Minimax vectors, we apply only SVM with a linear kernel. Figures~\ref{fig:DS1_plot} and~\ref{fig:DS2_plot} show the accuracy and eigenvalues w.r.t. different dimensionality of Minimax vectors. As mentioned earlier, the dimensions of Minimax vectors are sorted according to the respective eigenvalues, since a larger eigenvalue indicates a higher importance. By choosing only few dimensions, the accuracy attains its maximal value. We will elaborate in more detail on this further on.

We note that after computing the matrix of pairwise Minimax distances, one can in principle apply any of the embedding methods (e.g., PCA, SVD, etc) either to the original pairwise distance matrix or to the pairwise Minimax distance matrix. Therefore, our contribution is orthogonal to such embedding methods. For example on these  synthetic datasets, when we apply PCA to the original distance matrix, we do not observe a significant improvement in the results (e.g., $0.7838$  on DS1 and  $0.6671$  on DS2  with LogReg).

 \begin{figure*}[ht!]
    \centering
    \subfigure[DS1]
    {
        \includegraphics[width=0.28\textwidth]{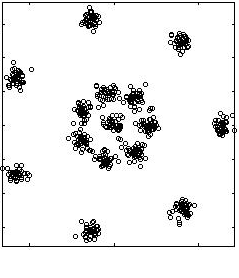}
        \label{fig:Datasets_DS1}
    }
    \hspace{5mm}
    \subfigure[DS2]
    {
        \includegraphics[width=0.28\textwidth]{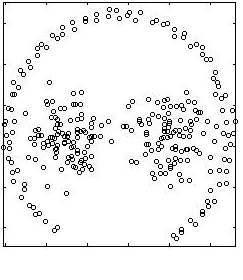}
        \label{fig:Datasets_DS2}
    }
\\
    \subfigure[results on DS1]
    {
        \includegraphics[width=0.33\textwidth]{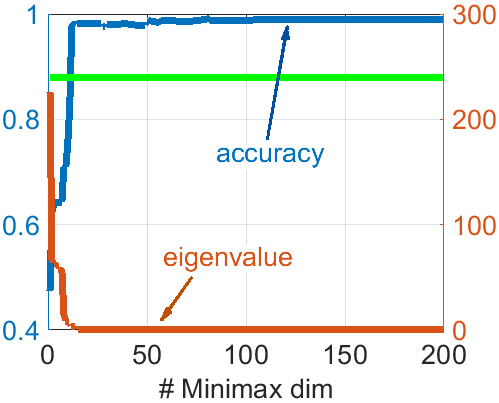}
        \label{fig:DS1_plot}
    }
    \hspace{5mm}
    \subfigure[results on DS2]
    {
        \includegraphics[width=0.33\textwidth]{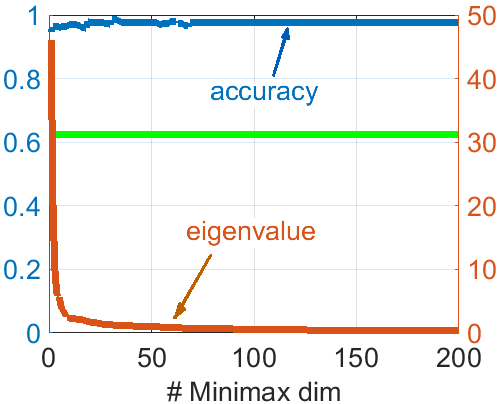}
        \label{fig:DS2_plot}
    }
    \caption{Illustration of DS1, DS2 and the accuracy.
    Different classes of DS1 have a similar structure, but the classes are different in DS2.
    The accuracy scores (shown for LogReg-MM) are stable w.r.t. the dimensionality of the Minimax vectors (Figures~\ref{fig:DS1_plot} and~\ref{fig:DS2_plot}). The straight green line shows the accuracy for the base LogReg.}
    \label{fig:Datasets_synthetic}
\end{figure*}

\begin{table*}[ht!]
\caption{Accuracy of different methods on synthetic datasets. Minimax measure particularly improves the results when the classes in the dataset have different shapes and types (e.g. DS2).}
\centering 
\begin{tabular}{|| c || c c c c || c c ||} 
\hline\hline 
 &\multicolumn{4}{c||}{standard}&\multicolumn{2}{c||}{Minimax}\\
 \hline
 dataset & SVM-lin& SVM-rbf & SVM-sig & LogReg & SVM-lin & LogReg \\ [0.5ex] 
\hline 
DS1 & 0.5749 & 0.9924 & 0.4260 & 0.8066 & 0.9917 & 0.9918  \\  
\hline 
DS2 & 0.5654 & 0.9295 & 0.3294 & 0.6252 & 0.9950 & 0.9983  \\  
\hline 
\end{tabular}
\label{table:accuracy_synthetic} 
\end{table*}

\noindent\textbf{Classification of UCI data}

We then perform  real-world experiments on twelve datasets from different domains,  selected from the UCI repository~\cite{Lichman:2013}:\footnote{The specifications of the datasets can be found at \url{https://archive.ics.uci.edu/ml/index.php}} \\
(1) \emph{Balance Scale}: contains $625$ observations modeling $3$ types of psychological experiments. \\
(2) \emph{Banknote Authentication}: includes $1372$ images taken from genuine and forged banknote-like specimens (number of classes is $2$).\\
(3) \emph{Cloud}: consists of $1024$  $10$-dimensional vectors, each dimension representing a specific parameter.\\
(4) \emph{Contraceptive Method}: contains information of $1473$ women, where the three classes are about the pregnancy status.\\
(5) \emph{Glass Identification}: contains $6$ types (classes) of glass w.r.t the oxide content. The number of instances is $214$.\\
(6) \emph{Haberman Survival}: contains  the survival of $306$ patients who had surgery for breast cancer. The number of classes is $2$.\\
(7) \emph{Hayes Roth}: is about a study on human subjects which contains $160$ instances and $3$ classes. \\
(8) \emph{Ionosphere}: includes $351$ $34$-dimensional instances collected from radars and organized into $2$ classes. \\
(9) \emph{Lung Cancer}: describes $3$ types of pathological lung cancer, including $32$ instances each with $56$ dimensions. \\
(10) \emph{Perfume}: consists of odors of $20$ different perfumes (classes), where the data is collected via OMX-GR sensor. There are in total $560$ measurements. \\
(11) \emph{Skin Segmentation}: the original dataset contains $245,057$ instances generated using skin textures from face images of different people. However, to make the classification task more difficult, we pick only the first $1,000$ instances of each class (to decrease the number of objects per class). The target variable is skin or non-skin sample, i.e. the number of classes is $2$.  \\
(12) \emph{User Knowledge}: describes $403$ students' knowledge level ($4$ classes) about the subject of Electrical DC Machines. \\
We call these datasets respectively UC1, UCI2, ..., UCI12.

\begin{table*}[ht!]
\caption{Accuracy scores of different methods on UCI datasets, when $60\%$ of the data is used for training. Using Minimax vectors improves the results. The best results are bolded.}
\centering 
\begin{tabular}{|| c || c c c c || c c || c c ||} 
\hline\hline 
 &\multicolumn{4}{c||}{standard}&\multicolumn{2}{c||}{Minimax}&\multicolumn{2}{c||}{dim.spec. Minimax}\\
 \hline
 data & SVM-lin& SVM-rbf & SVM-sig & LogReg & SVM-lin & LogReg & SVM-lin & LogReg \\ [0.5ex] 
\hline 
UCI1 & 0.8709 & 0.8974 & 0.4468 & 0.8687 & 0.6187 & 0.6086 & 0.9211 & \bf{0.9739}   \\  
\hline 
UCI2 & 0.9876 & \bf{1.0000} & 0.5577 & 0.9872 & 0.9989 & \bf{1.0000} & 0.8847 & 0.9827  \\  
\hline 
UCI3 & 0.9988 & 0.5788 & 0.5349 & 0.9988 & \bf{1.0000} & 1.0000 & \bf{1.0000} & \bf{1.0000}  \\  
\hline 
UCI4 & 0.5190 & 0.5533 & 0.4250 &  0.5107 & 0.5647 & \bf{0.5747} & 0.5392 & 0.5389 \\  
\hline 
UCI5 & 0.5924 & 0.6012 & 0.3371 & 0.6053 & 0.5971 & 0.6671 & 0.4918 & \bf{0.6347}  \\  
\hline 
UCI6 & 0.6893 & 0.7230 & 0.7344 & 0.7426 & \bf{0.7434} & 0.7377 & 0.7418 & 0.7352 \\  
\hline 
UCI7 & 0.6058 & 0.7750 & 0.3596 & 0.5365 & 0.7038 & 0.7115 & \bf{0.8635} & 0.8558  \\  
\hline 
UCI8 & 0.8779 & 0.9300 & 0.6536 & 0.8621 & 0.9457 & 0.9450 & 0.8843 & \bf{0.9336}  \\  
\hline 
UCI9 & 0.6917 & 0.7500 & 0.7500 & 0.6917 & 0.7750 & 0.7500 & \bf{0.8333} & \bf{0.8333}  \\  
\hline 
UCI10 & 0.7783 & 0.9318 & 0.1498 & 0.6933 & 0.9865 & \bf{0.9870} & 0.9798 &  0.9830 \\  
\hline 
UCI11 & 0.9287 & 0.9693 & 0.7635 & 0.8980 & \bf{0.9994} & \bf{0.9994} & 0.9906 & 0.9919  \\  
\hline 
UCI12 & 0.7835 & 0.7233 & 0.3019 & \bf{0.8612} & 0.5893 & 0.6757 &  0.6010 & 0.8592  \\  
\hline 
\end{tabular}
\label{table:accuracy_realworld60} 
\end{table*}

\begin{table*}[htb]
\caption{Accuracy of different methods when only $10\%$ of the data is used for training, where using Minimax distances often improves the  results. The best results are bolded. 
}
\centering 
\begin{tabular}{|| c || c c c c || c c || c c ||} 
\hline\hline 
 &\multicolumn{4}{c||}{standard}&\multicolumn{2}{c||}{Minimax}&\multicolumn{2}{c||}{dim.spec. Minimax}\\
 \hline
 data & SVM-lin& SVM-rbf & SVM-sig & LogReg & SVM-lin & LogReg & SVM-lin & LogReg \\ [0.5ex] 
\hline 
UCI1 & 0.8459 & 0.8477 & 0.4566 & \bf{0.8694} & 0.5114 & 0.6021 & 0.8270 & 0.7879  \\  
\hline 
UCI2 & 0.9859 & 0.9843 & 0.5305 & 0.9870 & 0.9895 & \bf{0.9916} & 0.6632 & 0.9060  \\  
\hline 
UCI3 & 0.9607 & 0.5207 & 0.5069 & 0.9849 & \bf{1.0000} & \bf{1.0000} & \bf{1.0000} & \bf{1.0000} \\  
\hline 
UCI4 & 0.4746 & 0.5078 & 0.4082 & 0.4871 & 0.4761 & \bf{0.5206} & 0.4714 & 0.4655 \\  
\hline 
UCI5 & 0.3417 & 0.4677 & 0.3385 & 0.3958 & 0.4365 & 0.4844 & 0.4100 & \bf{0.5000} \\  
\hline 
UCI6 & 0.7011 & 0.7275 & 0.7347 & 0.7311 & \bf{0.7369} & 0.7362 & 0.7336 & 0.7176  \\  
\hline 
UCI7 & 0.4258 & 0.4593 & 0.3352 & 0.4775 & 0.5186 & 0.5758 & 0.5958 & \bf{0.6636}  \\  
\hline 
UCI8 & 0.8033 & 0.7692 & 0.6256 & 0.7943 & 0.9043 & \bf{0.9097} & 0.8000 & 0.8786  \\  
\hline 
UCI9 & 0.7045 & 0.7227 & 0.7136 & 0.7205 & 0.7159 & 0.6750 & 0.7386 & \bf{0.7500}  \\  
\hline 
UCI10 & 0.6937 & 0.5732 & 0.1053 & 0.5129 & \bf{0.8787} & 0.8775 & 0.8236 & 0.8481  \\  
\hline 
UCI11 & 0.9161 & 0.9294 & 0.6203 & 0.9166 & \bf{0.9983} & \bf{0.9983} & 0.9855 & 0.9861  \\  
\hline 
UCI12 & 0.5179 & 0.3772 & 0.3129 & 0.7200 & 0.3759 & 0.5315 & 0.3681 & \bf{0.7638}  \\  
\hline 
\end{tabular}
\label{table:accuracy_realworld10} 
\end{table*}

\begin{figure*}[ht!]
    \centering
    \hspace{-5mm}
    \subfigure[\emph{Glass Identification}]
    {
        \includegraphics[width=0.33\textwidth]{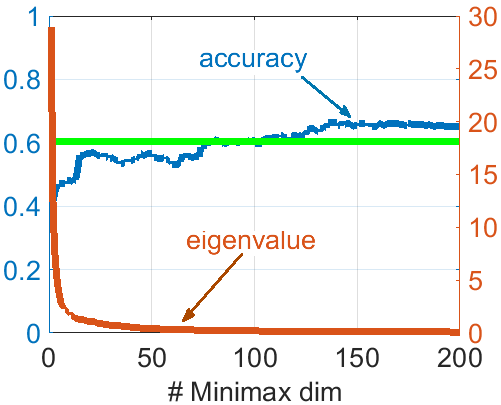}
        \label{fig:Glass_Identification_plot}
    }
    \hspace{5mm}
    \subfigure[\emph{Hayes Roth}]
    {
        \includegraphics[width=0.33\textwidth]{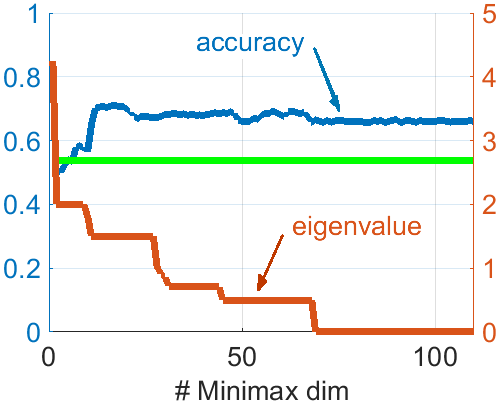}
        \label{fig:Hayes_Roth_plot}
    }
      \\
    \subfigure[\emph{Perfume}]
    {
        \includegraphics[width=0.33\textwidth]{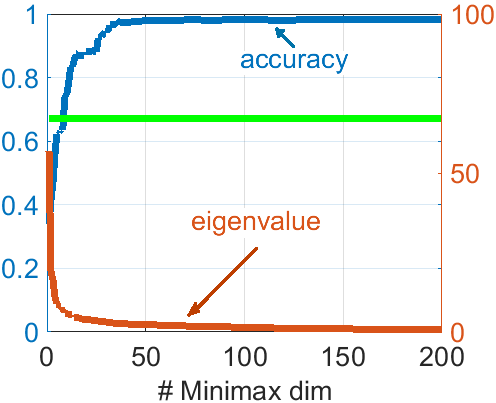}
        \label{fig:Perfume_plot}
    }
    \hspace{5mm}
    \subfigure[\emph{Skin Segmentation}]
    {
        \includegraphics[width=0.33\textwidth]{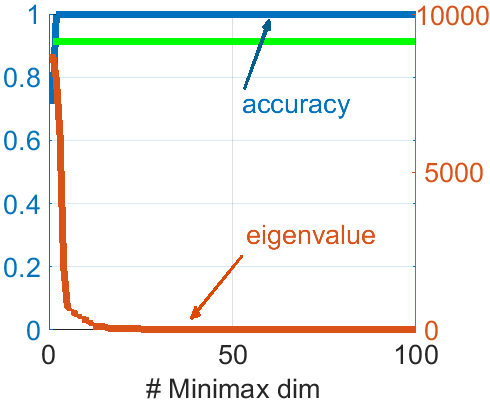}
        \label{fig:Skin_Segmentation_plot}
    }
    \caption{Accuracy score of LogReg-MM applied to different datasets when we choose different number of dimensions of Minimax vectors.
    The straight green lines show the base LogReg result.}
    \label{fig:plots_realworld}
\end{figure*}

\textbf{Accuracy scores.}
Table~\ref{table:accuracy_realworld60} shows the results for different methods applied to the datasets, when $60\%$ of the objects are used for training. We have repeated the random split of the data for $20$ times and report the average results. The scores and the ranking of different methods are very consistent among different splits, such that the standard deviations are low (i.e., maximally about $0.016$).
We observe that often performing the classification methods on Minimax vectors improves the classification accuracy. In only very few cases the standard setup outperforms (slightly). In the rest, either the Minimax vectors or the dimension-specific variant of Minimax vectors yield a better performance. In particular, the Minimax variant is more appropriate for low dimensional data, whereas the dimension-specific Minimax variant outperforms on high-dimensional data. We elaborate more on the choice between them in the `model order selection' section.

Table \ref{table:accuracy_realworld10} shows the results when only $10\%$ of the objects are used for training. For this setting, we observe a consistent performance with the setting where $60\%$ of data is used for training, i.e. the use of Minimax vectors (either the standard variant or the dimension-specific one) improves the accuracy scores. In this setting, only with UCI12 the non-Minimax method performs better. However, even on this dataset, the dimension-specific Minimax performs very closely to the best result.
Such a consistency is observed for other ratios of train and test sets too.

\textbf{Model order selection.}
Choosing the appropriate number of dimensions for Minimax vectors (i.e. their dimensionality) constitutes a model order selection problem. We study in detail how the dimensionality of the Minimax vectors affects the results.  Figure~\ref{fig:plots_realworld} shows the accuracy scores for Minimax-LogReg applied to four of the datasets w.r.t. different number of dimensions (the other datasets behave similarly).
The dimensions are ordered according to their importance (the value of respective eigenvalue). Choosing a very small number of dimensions might be insufficient since we might lose some informative dimensions, which yields underfitting. By increasing the dimensionality, the method extracts more sufficient information from the data, thus the accuracy improves. We note that this phase occurs for a very wide range of choices of dimensions. However, by increasing the number of dimensions even more, we might include non-informative or noisy features (with very small eigenvalues), where then the accuracy stays constant or decreases slightly, due to overfitting. However, an interesting advantage of this approach is that the overfitting dimensions (if there exists any) have a very small eigenvalue, thus their impact is negligible. This analysis leads to a simple and effective model order selection criteria: Fix a small threshold and pick the dimensions whose respective eigenvalues are larger than the threshold. The exact value of this threshold may not play a critical role  or it can be estimated via a validation set in a supervised learning setting.

\textbf{Model selection.}
According to our experimental results, very often, either the standard Minimax classification or the dimension-specific variant outperform the baselines. In a supervised learning setting, the correct choice between these two Minimax variants (i.e., the model selection task) can be made via measuring the prediction ability, e.g. the accuracy score. However, this approach might not be applicable for unsupervised learning, e.g. clustering, where the ground truth is not given. We investigate a simple heuristics which can be employed in any arbitrary setting. When we perform eigen decomposition of the centered matrix ($\mathbf W^{MM}$ or $\mathbf W^{cMM}$), we normalize the eigenvalues by the largest of them such that the largest eigenvalue equals $1$. Then, the variant that yields a shaper decay in eigenvalues is supposed to be potentially a better model.  A  sharper decay of the eigenvalues indicates a tighter confinement to a low dimensional subspace, i.e. lower complexity in data representation. This possibly yields a better learning, and thereby a higher accuracy score.

To analyze this heuristics, we observe that  in our experiments on \emph{Cloud}, \emph{Glass Identification}, \emph{Haberman Survival}, \emph{Perfume} and \emph{Skin Segmentation} either the accuracy scores are not significantly distinguishable or there is no consistent ranking of the two different variants (Minimax or dimension-specific Minimax) such that it is difficult to decide which one is better. For these datasets the eigenspectra are very similar as shown for example for \emph{Cloud} dataset in Figure~\ref{fig:Cloud_eigen}.
Among the remaining datasets, the heuristics works properly on \emph{Balance Scale}, \emph{Contraceptive Method}, \emph{Hayes Roth}, \emph{Lung Cancer} and \emph{User Knowledge} datasets, where two sample eigenspectra are shown in Figures~\ref{fig:Balance_Scale_eigen} and~\ref{fig:Contraceptive_Method_Choice_eigen}. Finally, the heuristics fails only on the two \emph{Banknote Authentication} and \emph{Ionosphere} datasets. However, for these datasets the accuracy scores as well as the decays are rather similar, as shown for example for \emph{Banknote Authentication} in Figure~\ref{fig:Banknote_Authentication_eigen}.

\begin{figure*}[ht!]
    \centering
    \subfigure[\emph{Cloud}]
    {
        \includegraphics[width=0.33\textwidth]{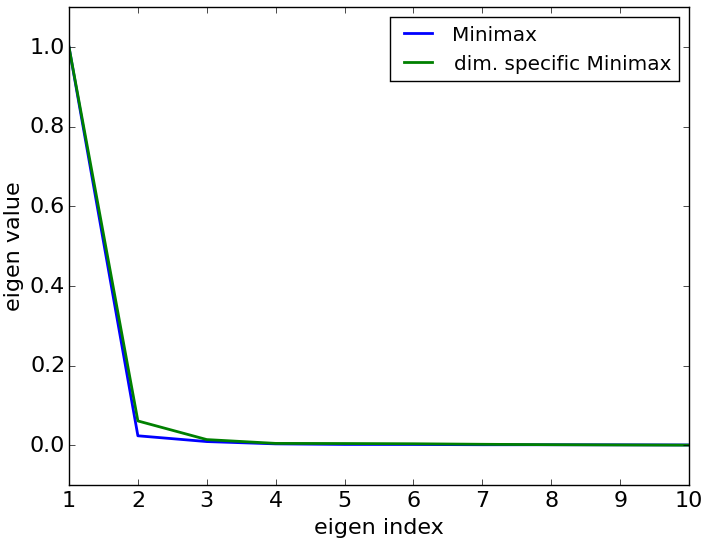}
        \label{fig:Cloud_eigen}
    }
    \hspace{5mm}
    \subfigure[\emph{Balance Scale}]
    {
        \includegraphics[width=0.33\textwidth]{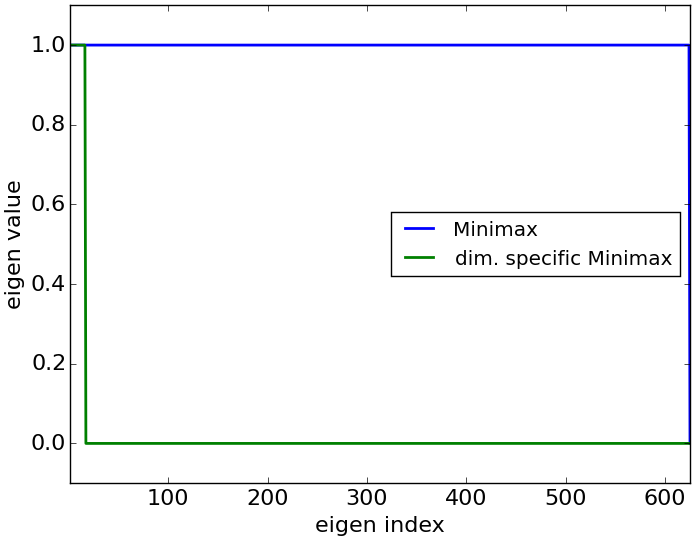}
        \label{fig:Balance_Scale_eigen}
    }
      \\
    \subfigure[\emph{Contraceptive Method}]
    {
        \includegraphics[width=0.33\textwidth]{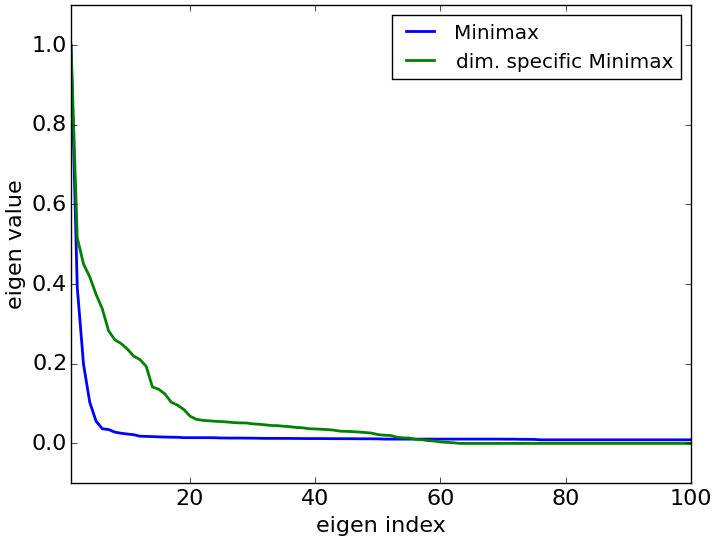}
        \label{fig:Contraceptive_Method_Choice_eigen}
    }
    \hspace{5mm}
    \subfigure[\emph{Banknote Authentication}]
    {
        \includegraphics[width=0.33\textwidth]{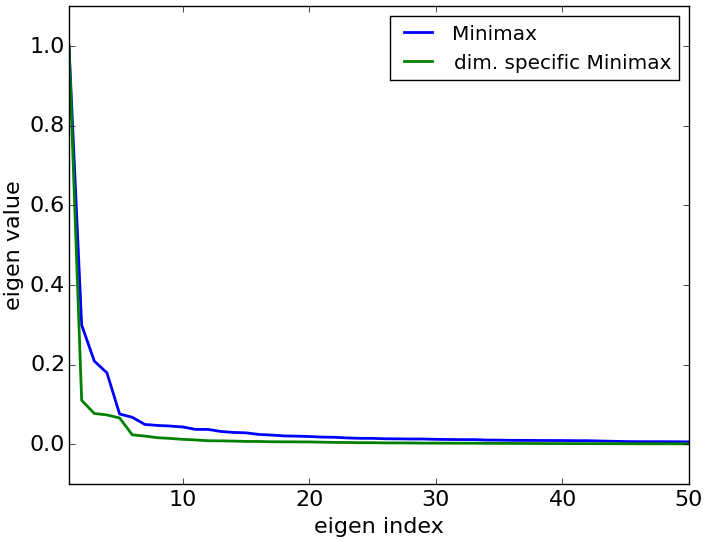}
        \label{fig:Banknote_Authentication_eigen}
    }    	
    \caption{Eigenspectra for different example datasets. A better Minimax variant often yields a sharper
eigenspectrum.}
    \label{fig:eigenspectra}
\end{figure*}

\begin{table*}[ht!]
\caption{Runtimes (in seconds) of FW-MM and MST-MM for computing pairwise Minimax distances on different real-world datasets. MST-MM performs significantly faster.} 
\centering 
\begin{tabular}{c | c c c c c c} 
\hline\hline 
dataset & \emph{Glass Iden.} & \emph{Haberman Surv.} & \emph{Hayes Roth} & \emph{Perfume} & \emph{Cloud} & \emph{Skin Seg.} \\ [0.5ex] 
\hline 
size       & 160 & 214 & 306 & 560 & 1024 & 2000 \\ 
FW-MM  & 0.030 & 0.067 & 0.230 & 1.399 & 18.54 & 127.12 \\
MST-MM & 0.019 & 0.024 & 0.080 & 0.150 & 0.239 & 0.540 \\
\hline 
\end{tabular}
\label{table:comparisonWarshall_Us} 
\end{table*}

\textbf{Efficiency.} As a side study, we also investigate the runtimes of computing pairwise Minimax distance via either based on computing an MST (MST-MM) or the Floyd-Warshall algorithm (FW-MM). The results (in seconds) have been shown in Table \ref{table:comparisonWarshall_Us} for some of the datasets. We observe that MST-MM yields a significantly faster approach to computer the pairwise Minimax distances. The difference is even more obvious when the dataset is larger.  We note that the both methods yield the same Minimax distances. We observe consistent results on the other datasets.

\subsection{Experiments on clustering of document scannings}

We  study the impact of performing Minimax distances on clustering of scannings of the documents collocated by a large document processing corporation. The dataset contains the vectors of $675$ documents each represented in a $4096$ dimensional space. The vectors contain metadata, the document layout, tags used in the document, the images in the document, the text in the document, the author information, the mathematical expressions and other structural information.
This dataset contains $56$ clusters  some of which have only one single document. We call this dataset \emph{dataset 1}.
Then, by removing the clusters with only one or two documents, we obtain \emph{dataset 2} which includes $634$ documents and $34$ clusters. Finally, we obtain \emph{dataset 3} which contains the clusters that have at least $5$ documents. This datasets consists of $592$ documents and $21$ clusters. We compute the pairwise distances of pairs of documents according to squared Euclidean distance.  The goal is to demonstrate the applicability of Gaussian Mixture Models (GMM) to Minimax distances, as well as to show if the use of Minimax distances improves the results.

\begin{table*}[ht!]
\caption{Performance of  GMM  on original vectors (\emph{base}), on the standard Minimax vectors (\emph{Minimax}), and on the dimension-specific Minimax vectors at subspaces (\emph{DimMM}) in the clustering task. The number at the front of \emph{DimMM} shows the dimensionality of the subspace used to compute the dimension-specific Minimax distances. Different methods are compared w.r.t. adjusted Rand score and adjusted Mutual Information criteria. We repeat the DimMM variant 10 times and report the mean(std) results. On all datasets, \emph{DimMM} yields  higher scores of the evaluation criteria.
}
\centering 
\begin{tabular}{|| c || c c || c c || c c ||} 
\hline\hline 
 &\multicolumn{2}{c||}{dataset 1}&\multicolumn{2}{c||}{dataset 2}&\multicolumn{2}{c||}{dataset 3}\\
 \hline
 method & Rand score & Mutual Info. & Rand score & Mutual Info. & Rand score & Mutual Info. \\ [0.5ex] 
\hline 
base & 0.2539 & 0.5914 & 0.3191 & 0.6478 & 0.4268 & 0.7195    \\  
\hline 
Minimax & 0.3033 & 0.6263 & 0.3995 & 0.6943 & 0.4611 & 0.7407\\  
\hline 
DimMM050 & 0.3605(0.016) & 0.6489(0.015) & 0.4306(0.016) & 0.7052(0.018) & 0.4556(0.014) & 0.7294(0.018)   \\  
DimMM075 & 0.3767(0.013) & 0.6431(0.019) & 0.4238(0.014) & 0.7003(0.019) & 0.5254(0.015) & 0.7587(0.018)    \\  
DimMM100 & 0.3728(0.014) & 0.6451(0.018) & 0.4126(0.011) & 0.6996(0.012) & 0.5002(0.013) & 0.7577(0.016)    \\  
DimMM125 & 0.3538(0.015) & 0.6470(0.015) & 0.4167(0.017) & 0.6985(0.019) & 0.4731(0.015) & 0.7448(0.019)   \\  
DimMM150 & 0.3431(0.012) & 0.6418(0.016) & 0.3932(0.013) &0.6960(0.014)  & 0.4917(0.012) & 0.7530(0.014)    \\  
DimMM175 & 0.3484(0.017) & 0.6459(0.016) & 0.3766(0.018) & 0.6896(0.019) & 0.4512(0.018) & 0.7375(0.017)    \\  
DimMM200 & 0.3351(0.015) & 0.6374(0.018) & 0.4023(0.011) & 0.6948(0.011) & 0.4607(0.014) & 0.7429(0.016)   \\  
DimMM250 & 0.3322(0.012) & 0.6413(0.016) & 0.3809(0.013) & 0.6866(0.0114) & 0.5034(0.011) & 0.7530(0.012)  \\  
DimMM300 & 0.3243(0.012) & 0.6376(0.015) & 0.3942(0.014) & 0.6954(0.017) & 0.4800(0.016) & 0.7469(0.015)  \\  
DimMM350 & 0.3332(0.016) & 0.6419(0.018) & 0.3831(0.012) & 0.6798(0.013) & 0.4660(0.011) & 0.7452(0.011)  \\  
DimMM400 & 0.3385(0.013) & 0.6406(0.013) & 0.4089(0.014 & 0.6944(0.015) & 0.4680(0.013) & 0.7429(0.012)  \\  
DimMM500 & 0.3272(0.015) & 0.6378(0.0179 & 0.4008(0.016) & 0.6929(0.019) & 0.4653(0.016) & 0.7470(0.017)  \\  
\hline 
\end{tabular}
\label{table:results_Xerox} 
\end{table*}

We apply dimension-specific Minimax distances to different subspaces of the original data. For this purpose, we define the parameter $b$ to be the number of dimensions of the subspace. Then, we randomly partition the original features (dimensions) such that the dimensionality of each subset equals $b$ (the last subset might have less dimensions than $b$). Note that different subsets have the same number of documents which equals to the number of documents in the original dataset. For each subset, we compute the pairwise Minimax distances and obtain multiple Minimax matrices. Thus, we apply the collective Minimax embedding to compute an embedding in a new space. Finally, we apply GMM to the Minimax vectors and compare with the results of GMM on original (base) vectors and GMM on standard Minimax vectors.

We use adjusted Rand score~\cite{hubert1985comparing} and adjusted Mutual Information~\cite{Vinh:2010} to evaluate the performance of different methods. Rand score computes the similarity between the estimated and the true clusterings. Mutual Information measures the mutual information between the two solutions.
Note that we compute the adjusted version of these criteria, such that they yield zero for random clustering.
Table~\ref{table:results_Xerox} shows the results on the three datasets. The number at the front of different dimension-specific (DimMM) variants indicates $b$. We repeat the DimMM variant 10 times and report the mean(std) results. We observe: i) computing Minimax distances enables GMM to better capture the underlying structures, thus yields improving the results. ii) The dimension-specific variant improves the clusters even further via extracting appropriate structures in different subspaces. iii) However, the first improvement, i.e., the use or not use of Minimax distances, has a more significant impact than the choice between the standard or the dimension-specific Minimax variants. Finally, we note that DimMM is equivalent to the standard Minimax if $b=4096$.

\subsection{Minimax $K$-NN classification}
We then examine the performance of our algorithm for $K$ nearest neighbor search with Minimax distances.
We perform our experiments on seven datasets,  four selected from 20 newsgroup collection and the others come from image and plant specification.
\\
(1) COMP: a subset of 20 newsgroup contains $2,936$ documents around \emph{computers}: `comp.graphics', `comp.os.ms-windows.misc', `comp.sys.ibm.pc.hardware', `comp.sys.mac.hardware', `comp.windows.x'. \\
(2) REC: a subset of 20 newsgroup with $2,389$ documents on \emph{sports}: `rec.autos', `rec.motorcycles', `rec.sport.baseball',  `rec.sport.hockey'. \\
(3)  SCI: a subset of 20 newsgroup having $2,373$ documents about \emph{science}: `sci.crypt' , `sci.electronics', `sci.med',  `sci.space'.  \\
(4)  TALK: a subset of 20 newsgroup with $1,952$ documents related to \emph{talk}: `talk.politics.guns', `talk.politics.mideast',  `talk.politics.misc', `talk.religion.misc'. \\
(5) IRIS: a common  dataset with $150$ samples from  three species `setosa', `virginica' and `versicolor'. \\
(6) OLVT: the Olivetti faces dataset from AT\&T which contains pictures from $40$ individuals from each $10$ pictures. The dimensionality is $4,096$. \\
(7) DGTS: images of $10$ digits each with $64$ dimensions ($1,797$ digits in total).

These datasets are well-known and publicly accessible, e.g. via \emph{sklearn.datasets}.\footnote{These datasets can be read and extracted via Python interfaces as described in \url{http://scikit-learn.org/stable/modules/classes.html\#module-sklearn.datasets}}
For the 20 newsgroup datasets, we obtain the vectors after eliminating the stop words. \emph{Cosine} similarity is known to be a better choice for textual data. Thus, for each pair of document vectors $\mathbf x_i$ and $\mathbf x_j$, we compute their \emph{cosine} similarity and then their dissimilarity by $1-cosine(\mathbf x_i,\mathbf x_j)$ to construct the pairwise dissimilarity matrix $\mathbf D$. For non-textual datasets, we directly compute the pairwise squared Euclidean distances. Thereby, we construct graph $\mathcal G(\mathbf O,\mathbf D)$. Note that the method in~\cite{KimC13AAAI} is not applicable to cosine-based dissimilarities.
We compare our algorithm (called PTMM) against the following methods.

\begin{enumerate}[leftmargin = *]
\item STND: Standard $K$-NN with the basic dissimilarities $\mathbf D$.
\item MPMM: Minimax $K$-NN proposed in~\cite{KimC07icml}.
\item Link: Link-based $K$-NN based on shortest distance algorithm~\cite{Dijkstra:ShortestPath}.
\end{enumerate}

$L^+$~\cite{Fouss:2012,YenSMS08} is a link-based method computed based on the pseudo inverse of Laplacian matrix.
However, this method has two main deficiencies: i) it is very slow; as mentioned earlier, its runtime is $\mathcal O(N^3)$ (see Table~\ref{table:runtime_L+} for runtime results on 20 newsgroup datasets), and ii) in the way that it is used (e.g.~\cite{YenSMS08,KimC13AAAI}) $L^+$ is computed for all data, including training and test objects, i.e. the test objects are not completely separated from training. The correct approach would compute the pseudo inverse of the Laplacian only for the training dataset and then would extend it to contain the out-of-sample object. 

We perform the experiments and compute accuracy in \emph{leave-one-out} manner, i.e. the $i$-th object is left out and its label is predicted via the other objects. In the final voting, the nearest neighbors are weighted according to inverse of their  dissimilarities with the test object. We calculate the accuracy by averaging over all $N$ objects. $M$-fold cross validation could be used instead of the \emph{leave-one-out} approach. However, the respective ranking of the runtimes of different methods is independent of the way the accuracy is computed.

\begin{figure*}[ht!]
    \centering
    \subfigure[Runtime for COMP]
    {
        \includegraphics[width=0.34\textwidth]{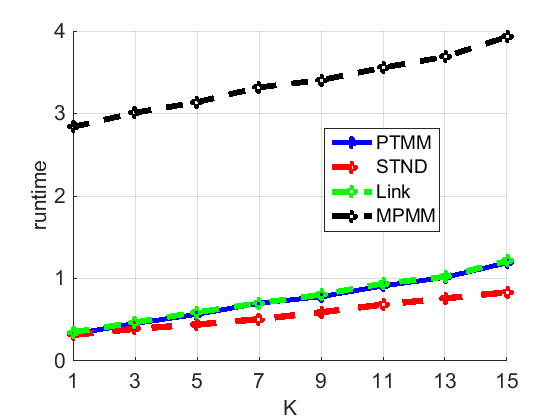}
        \label{fig:Time_COMP}
    }
    \hspace{5mm}
    \subfigure[Runtime for REC]
    {
        \includegraphics[width=0.34\textwidth]{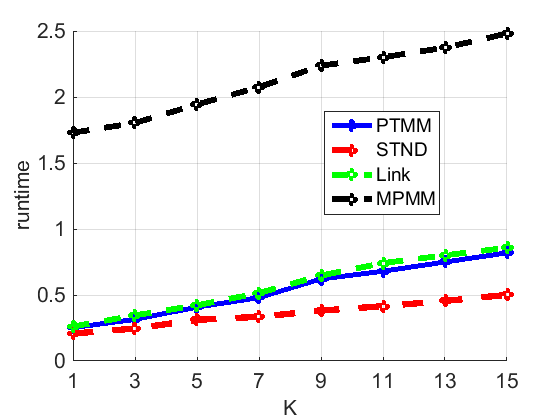}
        \label{fig:Time_REC}
    }
    \\
    \subfigure[Runtime for SCI]
    {
        \includegraphics[width=0.34\textwidth]{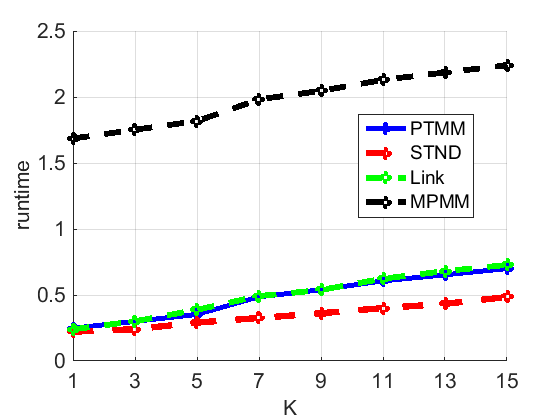}
        \label{fig:Time_SCI}
    }
    \hspace{5mm}
    \subfigure[Runtime for TALK]
    {
        \includegraphics[width=0.34\textwidth]{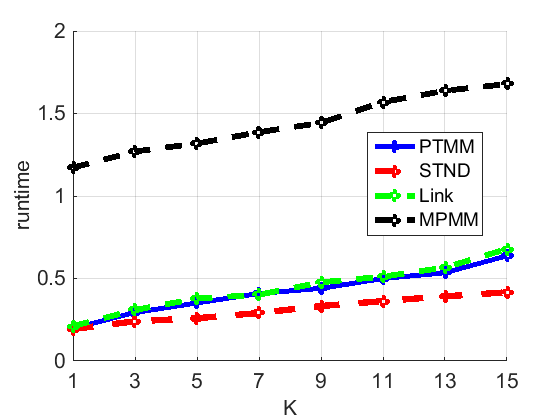}
        \label{fig:Time_TALK}
    }
    \caption{Comparison of  runtime (in seconds)  of different methods on 20 news group datasets. PTMM is significantly faster than MPMM and is only slightly slower than STND. Figures~\ref{fig:Acc_COMP} and \ref{fig:Acc_REC} show the accuracy scores, where Minimax $K$-NN gives the best results.}
    \label{fig:Runtime_Res_20News}
\end{figure*}

\begin{figure*}[ht!]
    \centering
     \subfigure[Accuracy for COMP]
    {
        \includegraphics[width=0.34\textwidth]{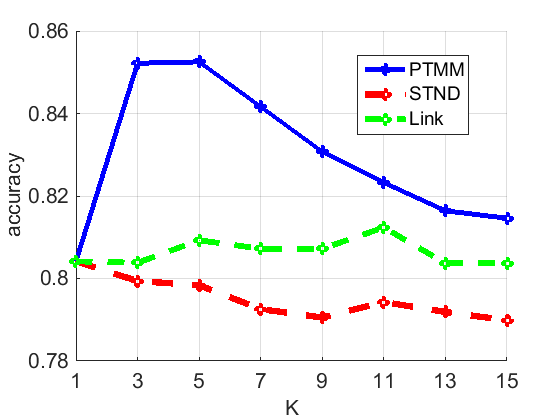}
        \label{fig:Acc_COMP}
    }
    \hspace{5mm}
    \subfigure[Accuracy for REC]
    {
        \includegraphics[width=0.34\textwidth]{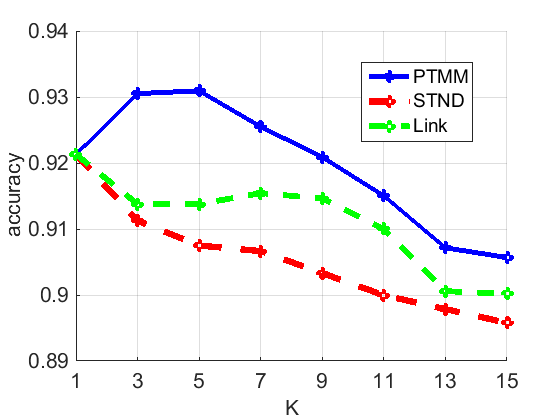}
        \label{fig:Acc_REC}
    }
    \\
    \subfigure[Accuracy for SCI]
    {
        \includegraphics[width=0.34\textwidth]{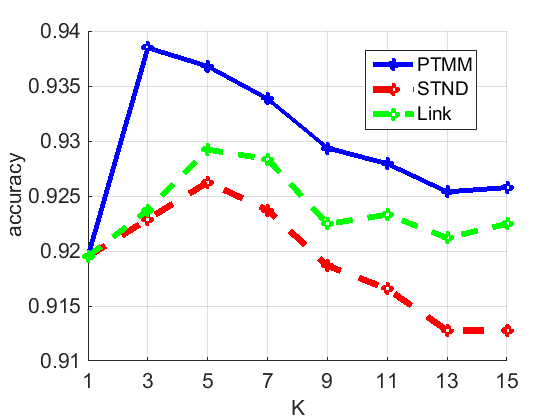}
        \label{fig:Acc_SCI}
    }
    \hspace{5mm}
    \subfigure[Accuracy for TALK]
    {
        \includegraphics[width=0.34\textwidth]{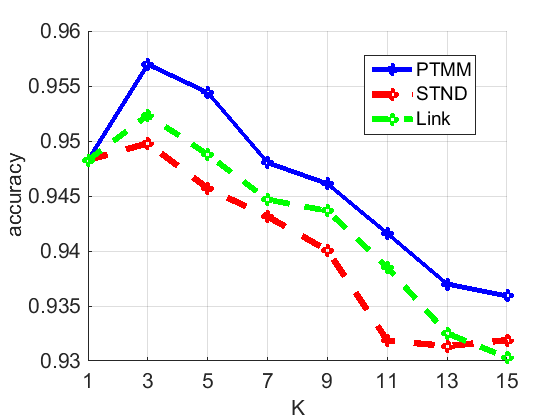}
        \label{fig:Acc_TALK}
    }
    \caption{Accuracy scores of different $K$-NN methods on 20 new group datasets, where Minimax $K$-NN gives the best results.}
    \label{fig:Acc_Res_20News}
\end{figure*}

\begin{figure*}[!ht]
    \centering
     \subfigure[Runtime for IRIS]
    {
    \includegraphics[width=0.31\textwidth]{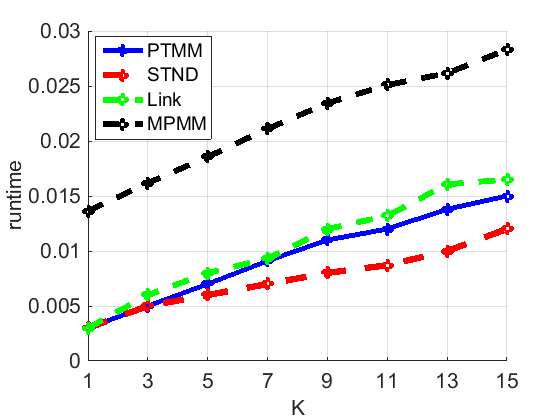}
        \label{fig:Time_IRIS}
    }
    \subfigure[Runtime for OLVT]
    {
        \includegraphics[width=0.31\textwidth]{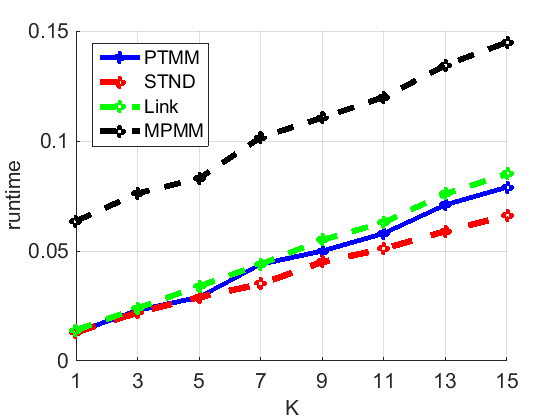}
        \label{fig:Time_OLVT}
    }
    \subfigure[Runtime for DGTS]
    {
        \includegraphics[width=0.31\textwidth]{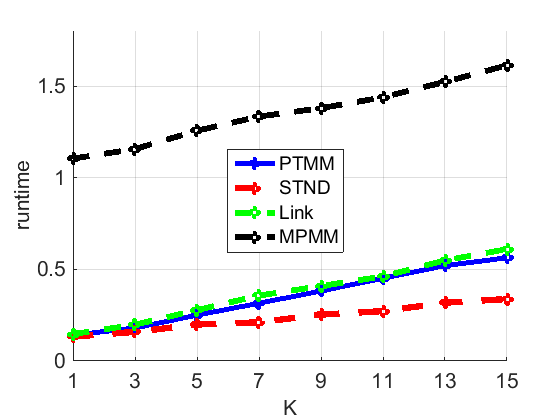}
        \label{fig:Time_DGTS}
    }
    \caption{Comparison of runtime (in seconds) of different methods on non-textual datasets. PTMM runs much faster than MPMM.}
    \label{fig:Runtime_Res_other}
\end{figure*}

\begin{table}[ht!]
\caption{Runtime (in seconds) of computing $L^+$ for different 20 news group datasets. $L^+$ is considerably slower.} 
\centering 
\begin{tabular}{c | c c c c} 
\hline\hline 
dataset & COMP & REC & SCI & TALK \\ 
\hline 
runtime & 20.67 & 12.93 & 11.84 & 6.55 \\  
\hline 
\end{tabular}
\label{table:runtime_L+} 
\end{table}

Figure~\ref{fig:Runtime_Res_20News} illustrates  the runtime of different measures on 20 news group datasets.
Our algorithm for computing the Minimax distances (i.e. PTMM) is significantly faster than MPMM. PTMM is only slightly slower than the standard method and is even slightly faster than Link. The main source of computational inefficiency of MPMM comes from requiring a MST which needs an $\mathcal O(N^2)$ runtime.  PTMM theoretically has the same complexity as STND (i.e. $\mathcal O(N)$) and in practice performs only slightly slower which is due to one additional update step.  As mentioned $L^+$ is considerably slower, even compared to MPMM (see Table~\ref{table:runtime_L+}).

To demonstrate the generality of our algorithm, we have performed experiments on datasets from other domains. Figure~\ref{fig:Runtime_Res_other} shows the runtime of different methods on non-textual datasets (IRIS, OLVT, and DGTS).
We observe a consistent performance behavior:  PTMM runs only slightly slower than STND and much faster than MPMM. Link is slightly slower than PTMM.

The usefulness of Minimax distances for $K$-NN classification has been investigated in previous studies, e.g. in~\cite{KimC07icml,KimC13AAAI}. However, for the proof of concept, in Figures~\ref{fig:Acc_Res_20News} and~\ref{fig:Accuracy_Res_other}, we illustrate the accuracy of different methods on the textual and non-textual datasets. Note that PTMM and MPMM yield the same accuracy scores. They differ only in the way they compute the Minimax nearest neighbors of $v$, thereby we  have compared them only w.r.t runtime. Consistent to the previous study, we observe that Minimax measure is always the best choice, i.e. it yields the highest accuracy. We emphasize that this is the first study that demonstrates the effectiveness and performance of Minimax $K$-nearest neighbor classification beyond Euclidean distances (i.e. \emph{cosine} similarity, a measure which is known to be more appropriate for textual data). As mentioned earlier, the previous studies \cite{KimC07icml,KimC13AAAI} always build the Minimax distances on squared Euclidean measures.

\begin{figure}[!t]
    \centering
    \subfigure[Accuracy for IRIS]
    {
        \includegraphics[width=0.31\textwidth]{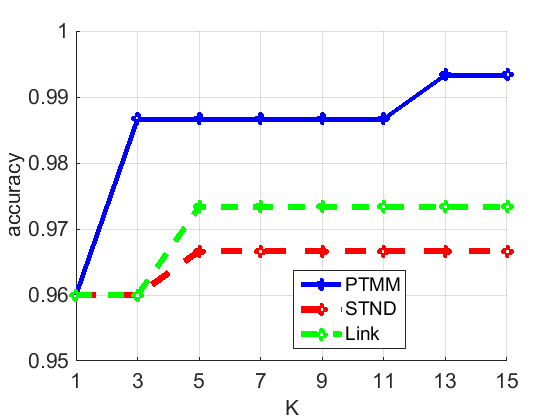}
        \label{fig:Acc_IRIS}
    }
    \subfigure[Accuracy for OLVT]
    {
        \includegraphics[width=0.31\textwidth]{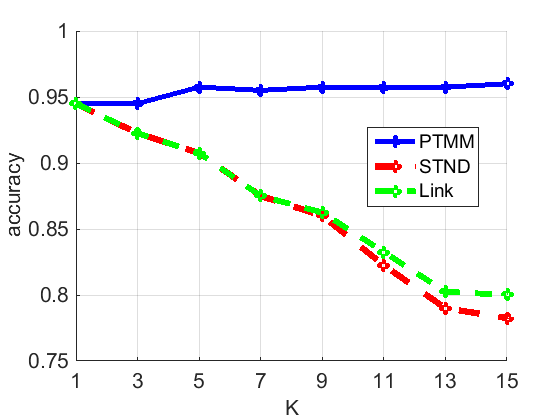}
        \label{fig:Acc_OLVT}
    }
    \subfigure[Accuracy for DGTS]
    {
        \includegraphics[width=0.31\textwidth]{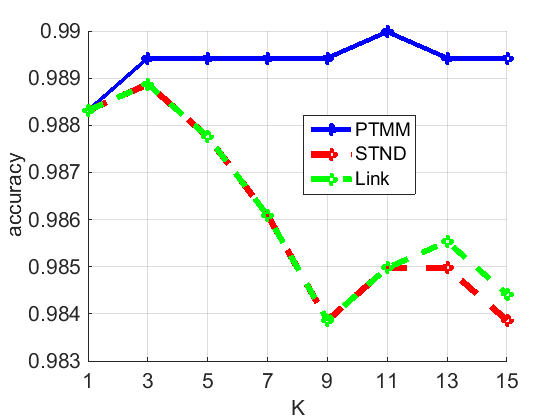}
        \label{fig:Acc_DGTS}
    }
    \caption{Comparison of accuracy of different methods on IRIS, OLVT and DGTS datasets. Using Minimax distance measure improves the results.}
    \label{fig:Accuracy_Res_other}
\end{figure}

\textbf{Scalability.} Finally, we study the scalability of different methods on larger datasets from two-moons data. As shown in~\cite{KimC13AAAI} only $L^+$ and Minimax measures  perform correctly on this data which has a non-linear geometric structure. Table~\ref{table:runtime_2moon} compares the runtime of these two methods for $K=5$.  PTMM performs significantly faster than the alternatives, e.g. it is  three times faster than MPMM and $250$ times faster than $L^+$ for $10,000$ objects.

\begin{table}[ht!]
\caption{Runtime (in seconds) of different methods on two-moon datasets. PTMM performs significantly faster. PTMM and MPMM produce the same results.} 
\centering 
\begin{tabular}{c | c c c c} 
\hline\hline 
size & 5,000 & 10,000 & 15,000 & 20,000 \\ [0.5ex] 
\hline 
PTMM & 1.03 & 4.42 & 10.96 & 19.187 \\  
MPMM & 2.43 & 13.15 & 34.72 & 58.48 \\  
$L^+$ & 107.61 &	1105.13 & 2855.1 & 6603.16 \\
\hline 
\end{tabular}
\label{table:runtime_2moon} 
\end{table}

\subsection{Outlier detection along with Minimax $K$-NN search}
At the end, we experimentally investigate the ability of Algorithm~\ref{alg:PathOutlier} for detecting outliers while performing $K$-nearest neighbor search. We run our experiments on the textual datasets, i.e., on COMP, REC, SCI, and TALK.
We add the unused documents of  20 news group collection to each dataset to play the role of outliers or the mismatching data source. This \emph{ourlier dataset} consists of the $1,664$ documents of  `alt.atheism', `misc.forsale', `soc.religion.christian' categories. For each document, either from train or from outlier set, we compute its $K$ nearest neighbors  only from the documents of the train dataset and check whether it is predicted as an outlier or not.

We compare our algorithm (called here PTMMout), against the two others: STND and Link.
These two algorithms do not directly provide outlier prediction. Thereby, in order to perform a fair comparison, we compute a Prim minimum spanning tree over the set containing $v$ and the objects selected by the $K$-NN search, where $v$ is the initial node. Then we check the same conditions to determine whether $v$ is determined as an outlier. We call these two variants respectively STNDout and LINKout.

We obtain the rate of correct outlier prediction for the outlier dataset, called \emph{rate\_1}, by the ratio of the number of correct outlier reports to the size of this set. However, an algorithm might report an artificially high outlier number. Thereby, similar to \emph{precision-recall} trade-off, we investigate the train dataset too and compute the ratio of the number of \emph{non-outlier} reports to the total size of train dataset and call it \emph{rate\_2}. Finally, similar to F-measure, we compute the harmonic average of \emph{rate\_1} and \emph{rate\_2} and call it \emph{o\_rate}, i.e.
\begin{equation}
	o\_rate = 2 \cdot \frac{rate\_1 \cdot rate\_2}{rate\_1+rate\_2} \, .
\end{equation}

\begin{figure*}[ht!]
    \centering
    \subfigure[\emph{o\_rate} for COMP]
    {
        \includegraphics[width=0.35\textwidth]{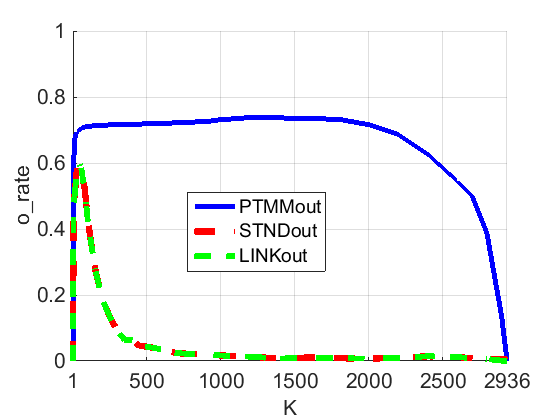}
        \label{fig:Outlier_COMP}
    }
    \hspace{5mm}
    \subfigure[\emph{o\_rate} for REC]
    {
        \includegraphics[width=0.35\textwidth]{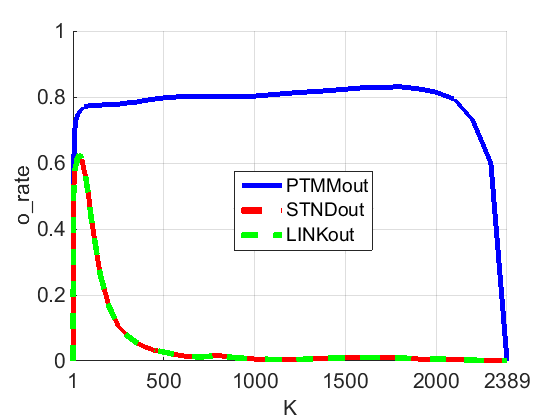}
        \label{fig:Outlier_REC}
    }
    \\
    \subfigure[\emph{o\_rate} for SCI]
    {
        \includegraphics[width=0.35\textwidth]{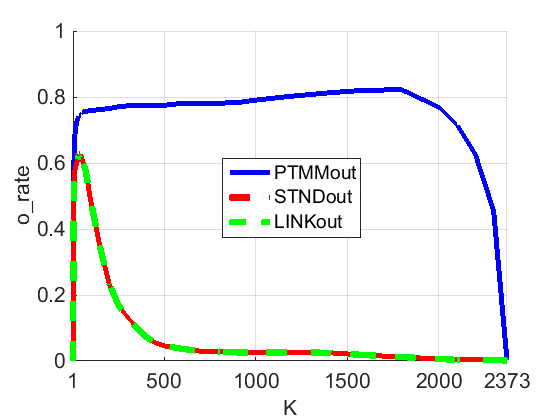}
        \label{fig:Outlier_SCI}
    }
    \hspace{5mm}
    \subfigure[\emph{o\_rate} for TALK]
    {
        \includegraphics[width=0.35\textwidth]{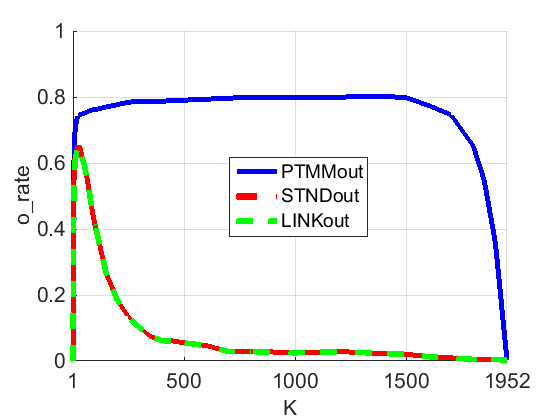}
        \label{fig:Outlier_TALK}
    }
    \caption{\emph{o\_rate} score for different text datasets when PTMMout, STNDout and LINKout are used to predict outliers. PTMMout yields significantly a higher and more stable \emph{o\_rate}. Figures~\ref{fig:Outlier_COMPTest} and~\ref{fig:Outlier_COMPTrain} show \emph{rate\_1} and \emph{rate\_2} for COMP.}
    \label{fig:Outlier_20News}
\end{figure*}

\begin{figure*}[ht!]
    \centering
    \subfigure[\emph{rate\_1} for COMP]
    {
        \includegraphics[width=0.35\textwidth]{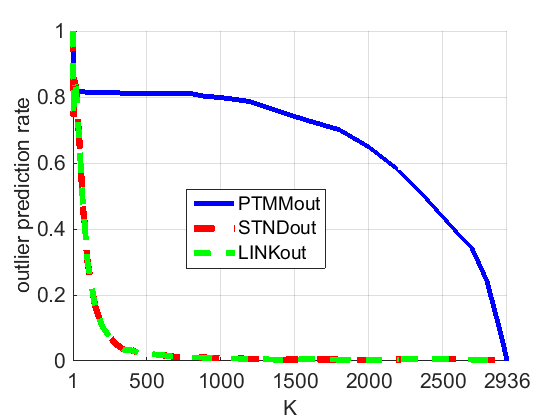}
        \label{fig:Outlier_COMPTest}
    }
    \hspace{5mm}
    \subfigure[\emph{rate\_2} for COMP]
    {
        \includegraphics[width=0.35\textwidth]{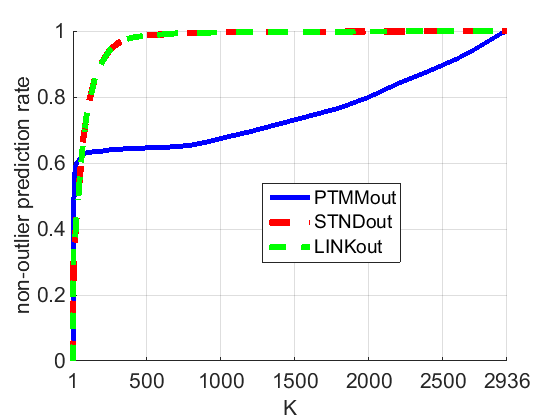}
        \label{fig:Outlier_COMPTrain}
    }
    \caption{\emph{rate\_1} and \emph{rate\_2} for COMP. Only PTMMout prevents sharp changes. }
    \label{fig:Outlier_20News_detail}
\end{figure*}

Figure~\ref{fig:Outlier_20News} shows the \emph{o\_rate} scores for different datasets and for different values of $K$. For all datasets, we observe that PTMMout yields significantly a higher \emph{o\_rate} compared to the other algorithms. Moreover, the \emph{o\_rate} of PTMMout is considerably stable and constant over different $K$, i.e. the choice of optimal $K$ is much less critical. For STNDout and LINKout, \emph{o\_rate} is maximal for only a fairly small range of $K$ (e.g. between $30$ and $50$) and then it sharply drops down to zero. Figure \ref{fig:Outlier_20News_detail} illustrates an in-depth analysis of \emph{rate\_1} and \emph{rate\_2} for the COMP dataset. Generally, as we increase $K$, \emph{rate\_1} decreases but \emph{rate\_2} increases. The reason is that as we involve more neighbors in $\mathcal {NP}(v)$, then the chance of having an edge whose both sides are inside $\mathcal {NP}(v)$ and its weight represents a Minimax distance increases. This case might particularly occur when $\mathcal {NP}(v)$ contains objects from more than one class. Thereby, the probability that an object is labeled as a \emph{non-outlier} increases. However, this transition is very sharp for STNDout and LINKout, such that for a relatively small $K$ they label every object (either from train set or from outlier set) as a \emph{non-outlier}. Moreover, as mentioned before, even for small $K$, STNDout and LINKout yield significantly lower \emph{o\_rate} than PTMMout.

\section{Conclusion}\label{sec:conclusion}

We developed a framework to apply Minimax distances to any learning algorithm that works on numerical data, which takes into account both generality and efficiency.
We studied both computing the pairwise Minimax distances for all pairs of objects and as well as computing the Minimax distances of all the objects to/from a fixed (test) object.
For the \emph{all-pair} case, we first employed the equivalence of Minimax distances over a graph and over a minimum spanning tree constructed on that, and developed an efficient algorithm to compute the pairwise Minimax distances on a tree. Then, we studied computing an embedding of the objects into a new vector space, such that their pairwise Minimax distance in the original data space equals to their squared Euclidean distance in the new space. We then extended our approach to the cases wherein there are multiple pairwise Minimax matrices.
In the following,  we studied computing Minimax distances from a fixed (test) object which can be used for instance in $K$-nearest neighbor search.
Moreover, we investigated in detail the edges selected by the Minimax distances and thereby augmented the Minimax $K$-nearest neighbor search with the ability of detecting outlier objects.
 For each setting, we demonstrated the effectiveness of our framework via several experimental studies.

\section*{Acknowledgment}
This work was partially supported by the Wallenberg AI, Autonomous Systems and Software Program (WASP) funded
by the Knut and Alice Wallenberg Foundation.
Parts of this work were accomplished at Xerox Research Centre Europe (XRCE).

\bibliographystyle{plain}
\bibliography{references}

\end{document}